\useunder{\uline}{\ul}{}
\newtheorem{definition}{Definition}
\newtheorem{lemma}{Lemma}
\newtheorem{theorem}{Theorem}
\title{A Dynamic Stiefel Graph Neural Network for Efficient Spatio-Temporal Time Series Forecasting}
\author{
Jiankai Zheng
\and
Liang Xie\thanks{Corresponding Author: Liang Xie}
\affiliations
School of Mathematics and Statistics, Wuhan University of Technology, Wuhan 430070, China \\
\emails
312796@whut.edu.cn, whutxl@hotmail.com
}
\begin{document}

\maketitle

\begin{abstract}
Spatio-temporal time series (STTS) have been widely used in many applications. However, accurately forecasting STTS is challenging due to complex dynamic correlations in both time and space dimensions. Existing graph neural networks struggle to balance effectiveness and efficiency in modeling dynamic spatio-temporal relations. To address this problem, we propose the Dynamic Spatio-Temporal Stiefel Graph Neural Network (DST-SGNN) to efficiently process  STTS. For DST-SGNN, we first introduce the novel Stiefel Graph Spectral Convolution (SGSC) and Stiefel Graph Fourier Transform (SGFT). The SGFT matrix in SGSC is constrained to lie on the Stiefel manifold, and SGSC can be regarded as a filtered graph spectral convolution. We also propose the Linear Dynamic Graph Optimization on Stiefel Manifold (LDGOSM), which can efficiently learn the SGFT matrix from the dynamic graph and significantly reduce the computational complexity. Finally, we propose a multi-layer SGSC (MSGSC) that efficiently captures complex spatio-temporal correlations. Extensive experiments on seven spatio-temporal datasets show that DST-SGNN outperforms state-of-the-art methods while maintaining relatively low computational costs.
\end{abstract}

\section{Introduction}

In recent years, spatio-temporal time series (STTS) have become increasingly important in fields such as traffic flow forecasting, financial market analysis, and weather forecasting \cite{1,2}. These data encompass changes and relationships in both time and space dimensions. The spatial relationships, in a broader sense, not only represent geographical locations but also denote the relationships between other types of entities, such as companies or sensors. Accurately forecasting STTS is crucial for decision support and resource optimization in these areas.

In contrast to traditional time series forecasting, spatio-temporal forecasting faces the challenge of analyzing complex and dynamic spatio-temporal correlations among STTS data. The key problem is how to effectively model both the temporal dependencies and the spatial interactions between variables. Existing representative methods for multivariate time series forecasting, such as TimeMixer \cite{41} and CycleNet \cite{50}, have demonstrated excellent performance in capturing periodicity and trend changes in time series data. These methods utilize techniques such as decomposition to model complex temporal patterns effectively. However, they primarily focus on temporal dimensions and often overlook the spatial correlations in STTS, limiting their optimal performance in spatio-temporal forecasting. Graph Neural Networks (GNNs) \cite{12}, on the other hand, have gained popularity in spatio-temporal analysis due to their ability to model spatial relationships through graph structures. Recent advances in this area include the development of extended message-passing mechanisms for simultaneous spatial and temporal feature extraction \cite{51}, integrating spatio-temporal graph convolution networks for traffic prediction \cite{52}, and geometric masking techniques for interpreting GNNs in spatio-temporal contexts \cite{53}. These approaches fully utilize the advantages of GNNs in handling spatial interactions between nodes, thereby providing more accurate and comprehensive results for spatio-temporal forecasting.

However, although GNNs have great potential in spatio-temporal forecasting, their application faces two major challenges. Firstly, existing graph-based methods are computationally expensive, limiting their performance on large-scale datasets. Secondly, many graph methods struggle to adapt to the dynamic changes in spatio-temporal data, which further increases the computational costs \cite{10}. Although some graph Fourier methods, such as FourierGNN \cite{11}, have alleviated these issues to some extent, they still fall short in analyzing dynamic spatio-temporal data.

To address these challenges, this paper proposes the Dynamic Spatio-temporal Stiefel Graph Neural Network (DST-SGNN), which introduces the Stiefel Graph Spectral Convolution (SGSC) to effectively handle the dynamic changes and high-dimensional characteristics of spatio-temporal data. DST-SGNN first reduces the dimensionality and complexity of spatio-temporal data through patch-based sequence decomposition, breaking it into smaller, manageable patches to reduce redundancy and simplify the structure. Building on this, the model introduces Multi-layer Stiefel Graph Spectral Convolution (MSGSC) to effectively capture complex spatio-temporal correlations while maintaining high efficiency. To efficiently exploit dynamic graphs, Linear Dynamic Graph Optimization on Stiefel Manifold (LDGOSM) is proposed to efficiently learn the transformation matrix for SGSC.

The main advantages of DST-SGNN are as follows:

\begin{itemize}
\item{DST-SGNN can dynamically model the evolving spatial correlations over time, thereby providing more accurate forecasting results.}
\item{DST-SGNN achieves significant optimization in terms of time and space complexity compared to traditional graph neural networks, making it more efficient for processing large-scale spatio-temporal data.}
\item{Extensive experiments on seven spatio-temporal datasets demonstrate DST-SGNN's superiority in accuracy and efficiency in comparison with state-of-the-art spatio-temporal and time series forecasting methods.}
\end{itemize}

\section{Related Work}
\subsection{Spatio-Temporal Time Series Forecasting}

Multivariate time series forecasting is a key application in time series analysis, aiming to predict future values by modeling multiple correlated variables. Patch-based methods (such as PatchTST \cite{54} and MTST \cite{55} ) have made significant progress by dividing time series into local segments and using the Transformer architecture to capture both local and global temporal dependencies. Meanwhile, methods based on time series decomposition, such as TimesNet \cite{14} and TimeMixer \cite{41}, have further improved the performance of long-sequence forecasting by optimizing the decomposition process, and they better capture the intrinsic temporal structure of time series.

In the field of STTS, researchers have developed several innovative methods. STPGNN \cite{56} identifies pivotal nodes and captures their complex spatio-temporal dependencies to enhance traffic prediction. MiTSformer \cite{44} addresses mixed time series by recovering latent continuous variables and leveraging multi-scale context for balanced modeling. ModWaveMLP \cite{57} offers an efficient and simple solution for traffic forecasting using MLPs combined with mode decomposition and wavelet denoising. Lastly, GPT-ST \cite{2} introduces a spatio-temporal pre-training framework that enhances downstream models' learning capabilities through masked autoencoders and adaptive masking strategies. These methods, each with unique strengths, provide diverse solutions for spatio-temporal time series forecasting.

\subsection{Graph Neural Networks for Spatio-Temporal Analysis}

GNNs are widely applied in spatio-temporal prediction. Their powerful modeling capabilities can effectively capture the complex relationships within the data. In the context of spatio-temporal prediction, the graph structure needs to be dynamically updated according to the data to adapt to spatio-temporal changes \cite{58,59}. Meanwhile, as demonstrated by LLGformer \cite{63} and MPFGSN \cite{64}, full spatio-temporal graphs can significantly improve the prediction performance and help uncover potential patterns. However, the dynamic update and the construction of full spatio-temporal graphs will incur significant computational overhead. Especially when dealing with large-scale dynamic data, this limits the widespread application of GNNs.

Graph spectral convolution methods perform convolution in the spectral domain based on the graph Fourier transform, which can effectively capture the complex relationships in STTS \cite{34}. Nevertheless, traditional methods require eigenvalue decomposition of the adjacency matrix, resulting in high computational costs. Methods such as GCN \cite{12,61} and ChebNet \cite{32,62} replace eigenvalue decomposition with a spatial-like convolution that aggregates information via the adjacency matrix. However, the construction and dynamic update of the adjacency matrix remain computationally expensive. The FourierGNN \cite{11} uses the standard Fourier transform to improve computational efficiency, but it is only applicable to fixed graph structures and is not suitable for dynamic spatio-temporal relationships.

\begin{figure*}[htbp]  % Let LaTeX decide the best position
    \centering 
    \includegraphics[width=1\linewidth]{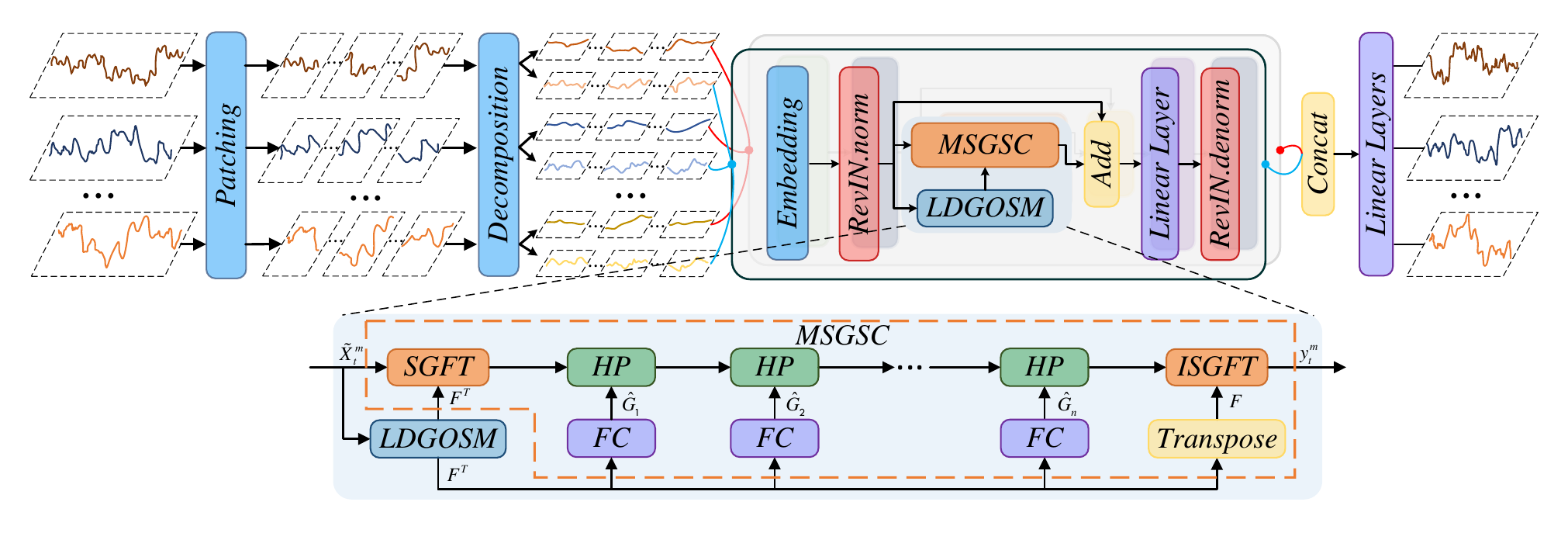} 
    \caption{Framework diagram of the DST-SGNN model.} 
    \label{fig:dst-sgnn}
\end{figure*}

\section{Problem Definition}

The problem of STTS forecasting involves forecasting future values of multiple variables based on their past observations across both time and space dimensions. STTS is similar to multivariate time series, and the main difference between them is that there exist spatial correlations among variables of STTS. Specifically, given a time series dataset with $N$ variables observed over $T$ consecutive time points, the goal is to predict the values of these variables at future time points $t+1$ to $t+\tau $, where $\tau $ is the prediction length. The prediction function ${{\mathbf{F}}_{\theta }}$ takes historical data ${{X}_{t}}$ as input and outputs the predicted values ${{Y}_{t}}$ for the next $\tau $ time points, capturing complex patterns and dependencies in the STTS data.

Formally, the prediction problem can be defined as finding a function ${{\mathbf{F}}_{\theta }}$ such that ${{Y}_{t}}={{\mathbf{F}}_{\theta }}({{X}_{t}})$, where ${{Y}_{t}}$ represents the predicted multivariate values for future time points $t+1$ to $t+\tau $. 

\section{Methodology}

This section details the DST-SGNN model methodology. Section 4.1 presents the overall framework, while Sections 4.2–4.5 cover module implementation, with Section 4.4 focusing on the core theory of the Stiefel graph neural network. Section 4.6 describes training and inference procedures.

\subsection{Overall Framework}

Figure \ref{fig:dst-sgnn} illustrates the overall framework of the DST-SGNN model, which mainly includes four modules: Patching, Decomposition, LDGOSM, and MSGSC. First, the Patching module decomposes the original high-dimensional spatio-temporal data into multiple smaller sub-sequences (patches) to reduce data complexity. Next, the Decomposition module further decomposes each patch to extract seasonal and trend components, which are used as nodes to construct a hyperpatch graph.  After constructing the hyperpatch graph, an initial transformation matrix is formed from the graph's data. Subsequently, the LDGOSM module dynamically optimizes the transformation matrix on the Stiefel manifold. Finally, the MSGSC module performs multi-step graph spectral convolutions,  where $HP$ denotes the element-wise Hadamard product operation in the spectral convoluation process. The seasonal and trend components are concatenated through the Concat module, and the resulting features are fed into the Linear Layers module to generate the final predictions. In the following, we will introduce each module in detail.

\subsection{Patching and Decomposition}

\subsubsection{Patching}
Given a multivariate time series window, the input  is  ${{X}_{t}}\in {{\mathbb{R}}^{T\times N}}$ of $N$ variables at timestamp $t$. Let $p$ denote the patch size, and $s$ denote the stride (the length of the non-overlapping region between two consecutive patches). Then, ${{X}_{t}}$ is fed into the $Patching$ module $\mathcal{T}$$:{{\mathbb{R}}^{T\times N}}\to {{\mathbb{R}}^{J\times p\times N}}$. This module transforms ${{X}_{t}}$ into $J=\left\lfloor (T-p)/s \right\rfloor +2$ patches. The specific calculation for the $Patching$ module ${{\hat{X}}_{t}}=\mathcal{T}({{X}_{t}})$ is as follows:
\begin{equation}
\hat{X}_{t,j}={{X}_{t}}_{[s(j-1)+1:s(j-1)+p]},(j=1,2,...,J)
\end{equation}
\noindent
where ${{\hat{X}}_{t}}\in {{\mathbb{R}}^{J\times p\times N}}$ represents the output of the $Patching$ module, and ${{\hat{X}}_{t,j}}$ is its $j$-th column. To ensure that $J$ is an integer, we pad the last $s$ values of the sequence to the end of the original sequence.

\subsubsection{Intra-patch Sequence Decomposition for Constructing the Hyperpatch Graph}

First, the time series ${{\hat{X}}_{t}}$ is decomposed into seasonal component $\hat{X}_{t}^{(1)}$ and trend component $\hat{X}_{t}^{(2)}$ using the series decomposition block of Autoformer \cite{3}.
\begin{equation}
\hat{X}_{t}^{(1)},\hat{X}_{t}^{(2)}=\text{SeriesDecomp}({{\hat{X}}_{t}})
\end{equation}

In the hyperpatch graph structure, each patch in the seasonal component $\hat{X}_{t}^{(1)}$ and trend component $\hat{X}_{t}^{(2)}$ is defined as a node in the graph. For the sliding window of the time series, each element within the window is considered to have a connection with every other element, thus it is represented in the form of a spatio-temporal fully connected graph. Based on $\hat{X}_{t}^{m}\in {{\mathbb{R}}^{J\times N\times p}},m=(1),(2)$, a hyperpatch graph structure $G_{t}^{m}=(X_{t}^{G,m},A_{t}^{G,m})$ is further constructed. This structure is initialized as a fully connected graph containing $(J\times N)\times p$ nodes, where $X_{t}^{G,m}\in {{\mathbb{R}}^{(J\times N)\times p}}$ represents the node features, and $A_{t}^{G,m}\in {{\mathbb{R}}^{((J\times N)\times p)\times ((J\times N)\times p)}}$ is the adjacency matrix, used to describe the relationships between nodes.

\subsection{Embedding and Normalization}

In the DST-SGNN model, the Embedding module maps the data $X_{t}^{G,m}\in {{\mathbb{R}}^{(J\times N)\times p}}$ after Decomposition into the hidden space $\tilde{X}_{t}^{m}\in {{\mathbb{R}}^{(J\times N)\times K}}$, where $K$ is the dimension of the hidden layer. This process reduces the data dimensionality and enhances the feature representation capability. The Normalization module normalizes each time series instance to have zero mean and unit variance, thereby reducing the distribution differences between training and testing data \cite{49}. This design improves the model's generalization ability and adaptability to different data distributions.

\subsection{Stiefel Graph Spectral Convolution}

\subsubsection{Basic Concepts}

\begin{definition}
The Stiefel Graph Spectral Convolution of a signal \( x \in \mathbb{R}^{n \times 1} \) with a kernel \( g \in \mathbb{R}^{n \times 1} \) is defined as:
\begin{equation}
x *_{s} g = F(F^T x \odot F^T g)
\end{equation}
where \( F \) satisfies the following optimization problem:
\begin{equation}
\begin{aligned}
& \text{Min} \quad \text{Tr}(F^T L F) \\ 
& \text{s.t.} \quad F \in St(n, d) \\ 
\end{aligned}
\label{fun1}
\end{equation}
\label{def1}
\end{definition}

In Definition \ref{def1}, $\odot $ denotes the Hadamard product, \( St(n, d) \triangleq \{ F \in \mathbb{R}^{n \times d}: F^T F = I_d \} \) is the Stiefel manifold, \( L = I_n - D^{-1/2} A D^{-1/2} \) is the Laplacian matrix, \( A \in \mathbb{R}^{n \times n} \) is the graph adjacency matrix, and \( D \) is the diagonal matrix, and its diagonal element is the sum of all elements in each row of \( A \), which represents the degree of the corresponding node in an undirected graph (i.e., the number of edges connected to the node).

\begin{theorem}
The matrix $F$ can be obtained by solving the eigenvalue decomposition of ${{D}^{-1/2}}A{{D}^{-1/2}}$ and selecting the eigenvectors corresponding to the d largest eigenvalues.
\label{the1}
\end{theorem}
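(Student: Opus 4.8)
The plan is to recognize the constrained problem \eqref{fun1} as a trace minimization that converts into a trace maximization against the normalized adjacency matrix, and then to invoke the Ky Fan variational principle to identify the global minimizer with the top eigenvectors. First I would write $M \triangleq D^{-1/2} A D^{-1/2}$ so that $L = I_n - M$. Substituting into the objective and using the Stiefel constraint $F^T F = I_d$ yields
\[
\mathrm{Tr}(F^T L F) = \mathrm{Tr}(F^T F) - \mathrm{Tr}(F^T M F) = d - \mathrm{Tr}(F^T M F),
\]
since $\mathrm{Tr}(I_d) = d$ is constant on $St(n,d)$. Hence minimizing $\mathrm{Tr}(F^T L F)$ is equivalent to maximizing $\mathrm{Tr}(F^T M F)$ over the same manifold. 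Because $A$ is the adjacency matrix of an undirected graph and $D$ is diagonal, $M$ is symmetric and admits an orthonormal eigendecomposition $M = U \Sigma U^T$ with eigenvalues $\lambda_1 \ge \cdots \ge \lambda_n$ and eigenvectors given by the columns of $U$.

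The central step is to show $\max_{F \in St(n,d)} \mathrm{Tr}(F^T M F) = \sum_{i=1}^d \lambda_i$, attained by the top-$d$ eigenvectors. I would substitute $G = U^T F$, which again satisfies $G^T G = I_d$, and rewrite $\mathrm{Tr}(F^T M F) = \mathrm{Tr}(G^T \Sigma G) = \sum_{i=1}^n \lambda_i p_i$, where $p_i$ is the squared Euclidean norm of the $i$-th row of $G$. Orthonormality of the columns of $G$ forces $0 \le p_i \le 1$ and $\sum_{i=1}^n p_i = d$. Maximizing the linear functional $\sum_i \lambda_i p_i$ subject to these box and sum constraints is solved by placing full weight on the largest eigenvalues, i.e.\ $p_1 = \cdots = p_d = 1$ and $p_{d+1} = \cdots = p_n = 0$, giving the value $\sum_{i=1}^d \lambda_i$ with maximizer $F$ equal to the first $d$ columns of $U$. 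Translating back through $L = I_n - M$, whose eigenvalues are $1 - \lambda_i$, the $d$ smallest eigenvalues of $L$ correspond exactly to the $d$ largest eigenvalues of $M = D^{-1/2} A D^{-1/2}$, which proves the claim.

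The delicate point is upgrading from stationarity to global optimality. A Lagrange-multiplier computation gives only the first-order condition $L F = F \Lambda$, whose solutions are \emph{any} selection of $d$ eigenvectors, and this does not by itself single out the top-$d$ choice. The row-norm linear-programming (equivalently, majorization) argument above is what supplies global optimality, and is the part I expect to require the most care. A secondary subtlety is spectral degeneracy: if $\lambda_d = \lambda_{d+1}$ the optimal subspace is still determined, but $F$ is unique only up to an orthogonal rotation within the degenerate eigenspace, which does not affect the statement of Theorem \ref{the1}.
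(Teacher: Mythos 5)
Your proposal is correct and follows essentially the same route as the paper: rewrite $\mathrm{Tr}(F^TLF)$ using $L=I_n-D^{-1/2}AD^{-1/2}$ and the constraint $F^TF=I_d$ to reduce the minimization to maximizing $\mathrm{Tr}(F^TD^{-1/2}AD^{-1/2}F)$ over the Stiefel manifold, solved by the top-$d$ eigenvectors. You actually go further than the paper on two points worth noting: you correctly obtain the constant $\mathrm{Tr}(F^TF)=d$ (the paper writes $\mathrm{Tr}(I_n)=n$, a slip since $F^TF=I_d$), and you supply an explicit proof of the Ky Fan step via the row-norm/linear-programming argument, whereas the paper simply asserts that the maximizer is given by the leading eigenvectors.
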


The proof of Theorem \ref{the1} is provided in Appendix \ref{appA.1}.

The ${{F}^{T}}x$ in Definition \ref{def1} can be considered as a pseudo-graph Fourier transform, which we refer to as the Stiefel Graph Fourier Transform (SGFT), defined as:

\begin{equation}
S(x)={{F}^{T}}x
\end{equation}
\noindent
We can also define the Inverse Stiefel Graph Fourier Transform (ISGFT) as:

\begin{equation}
{{S}^{-1}}(x)=Fx
\end{equation}

The SGSC of $x$ and $g$ can be obtained by first applying the SGFT to $x$ and $g$ separately, then performing element-wise multiplication, and finally applying the ISGFT. Since $F$ is not orthogonal, it does not satisfy the definition of the standard graph Fourier transform. However, we can prove that SGSC is still a specific type of standard graph spectral convolution \cite{32}.

\begin{theorem}

The Stiefel Graph Spectral Convolution can be viewed as the following specific filtered graph spectral convolution:

\begin{equation}
x{{*}_{s}}g=P{{g}_{\theta }}(\Lambda ){{P}^{T}}x
\end{equation}
\noindent
where $\Lambda =diag({{\lambda }_{1}},{{\lambda }_{2}},\cdots ,{{\lambda }_{n}})$, and ${{\lambda }_{1}},{{\lambda }_{2}},\cdots ,{{\lambda }_{n}}$ are the eigenvalues of $A$ sorted in descending order. $P=({{p}_{1}},{{p}_{2}},\cdots ,{{p}_{n}})$, where ${{p}_{i}}\in {{R}^{n\times 1}}$ is the eigenvector corresponding to ${{\lambda }_{i}}$. ${{g}_{\theta }}(\Lambda )=diag(\theta )$, and $\theta =({{\theta }_{1}},{{\theta }_{2}},\cdots ,{{\theta }_{n}})$ satisfies:
\begin{equation}
\theta_i = 
\begin{cases} 
p_i^T g & \text{if } \lambda_i \ge \lambda_d \\
0 & \text{if } \lambda_i < \lambda_d 
\end{cases}
\end{equation}
\label{the2}
\end{theorem}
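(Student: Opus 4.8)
The plan is to prove the identity by expanding both sides in the eigenbasis of the normalized adjacency matrix and checking that they yield the same vector. By Theorem \ref{the1}, the minimizer $F$ of problem \eqref{fun1} consists of the $d$ eigenvectors associated with the $d$ largest eigenvalues of $D^{-1/2} A D^{-1/2}$. Writing the full orthogonal eigendecomposition as $P = (p_1, p_2, \ldots, p_n)$ with eigenvalues ordered $\lambda_1 \ge \lambda_2 \ge \cdots \ge \lambda_n$, this means $F = (p_1, \ldots, p_d)$ is exactly the leading $d$ columns of $P$. I would make this identification explicit first, since it is the bridge between the definition of SGSC in Definition \ref{def1} and the filtered-convolution form in the statement.

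Next I would expand the left-hand side. Setting $a = F^T x$ and $b = F^T g$, both lie in $\mathbb{R}^d$ with entries $a_i = p_i^T x$ and $b_i = p_i^T g$ for $i = 1, \ldots, d$. The Hadamard product gives $(a \odot b)_i = (p_i^T x)(p_i^T g)$, and applying $F$ on the left (which sends the $i$-th coordinate vector to the column $p_i$) yields
\begin{equation}
x *_{s} g = F(F^T x \odot F^T g) = \sum_{i=1}^{d} (p_i^T x)(p_i^T g)\, p_i.
\end{equation}
I would then expand the right-hand side in the same basis. Since $P^T x$ has entries $p_i^T x$ and $g_\theta(\Lambda) = \mathrm{diag}(\theta)$, the product is $P g_\theta(\Lambda) P^T x = \sum_{i=1}^{n} \theta_i (p_i^T x)\, p_i$. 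Substituting the definition of $\theta_i$ — namely $\theta_i = p_i^T g$ when $\lambda_i \ge \lambda_d$ (the top $d$ indices) and $\theta_i = 0$ otherwise — collapses the sum to $\sum_{i=1}^{d} (p_i^T g)(p_i^T x)\, p_i$, which matches the previous display term by term and completes the argument.

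The computation is routine; the points that require care are notational rather than deep. First, Theorem \ref{the1} phrases $F$ through the eigenvectors of $D^{-1/2} A D^{-1/2}$ while the statement is written with $P$ and $\Lambda$, so I would fix at the outset that $P$ and $\Lambda$ denote the eigendecomposition of that same normalized matrix and that $F$ comprises its leading $d$ columns. Second, if the spectrum is degenerate at the cutoff (that is, $\lambda_d = \lambda_{d+1}$), the threshold $\lambda_i \ge \lambda_d$ may flag more than $d$ indices, so I would either assume a strict gap $\lambda_d > \lambda_{d+1}$ or read the condition as selecting exactly the $d$ eigenvectors that constitute $F$. Reconciling this notation and pinning down the tie-breaking convention is the only genuine subtlety; once $F$'s columns are identified with the top-$d$ eigenvectors, the two expansions are literally the same sum and the equality follows immediately.
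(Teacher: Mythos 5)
Your proposal is correct and follows essentially the same route as the paper's proof: both identify the columns of $F$ with the top-$d$ eigenvectors $p_1,\dots,p_d$ via Theorem \ref{the1}, expand $F(F^Tx\odot F^Tg)$ and $Pg_\theta(\Lambda)P^Tx$ as the sum $\sum_{i=1}^{d}(p_i^Tx)(p_i^Tg)\,p_i$, and match them term by term. Your remark about the degenerate case $\lambda_d=\lambda_{d+1}$ is a small point of care the paper omits, but it does not change the argument.
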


The proof of Theorem \ref{the2} is provided in Appendix \ref{appA.2}.

According to Theorem \ref{the2}, it can be seen that the Stiefel Graph Spectral Convolution satisfies all the properties of graph spectral convolution. Moreover, setting the coefficients ${{\theta }_{i}}$ corresponding to smaller eigenvalues to zero reduces the time and space complexity of the graph spectral convolution. And it also serves as a filtering and noise-reduction mechanism.

The SGSC we define can be easily extended to the case of multi-dimensional features. For $X\in {{R}^{n\times k}}$, if we choose the convolution kernel $G\in {{R}^{n\times k}}$, then the SGSC is given by:
\begin{equation}
X{{*}_{s}}G=F({{F}^{T}}X\odot {{F}^{T}}G)
\end{equation}

\subsubsection{Linear Dynamic Graph Optimization on Stiefel Manifold}

\begin{table*}[htbp]
\centering
\begin{tabular}{c|cccccrr}
\hline
Datasets  & PEMS03 & PEMS07 & Electricity & CSI300 & exchange\_rate & Solar & METR-LA \\ \hline
Features  & 358    & 883    & 321         & 100    & 8              & 593   & 207     \\
Timesteps & 26207  & 28223  & 26304       & 2791   & 7588           & 8760  & 34272   \\ \hline
\end{tabular}
\caption{Dataset statistics.} 
\label{tab1}
\end{table*}

In the SGSC, the introduction of the Stiefel manifold simplifies the convolution computation, but the eigenvalue decomposition of $L$ still results in a time complexity of $O({{N}^{3}})$, where $N$ represents the total number of nodes across all variables and all patches in our paper. Moreover, for spatio-temporal data, the spatial relationships between variables change over time, requiring graph adjacent matrix $A$ to be recalculated frequently. Each computation of the graph requires solving for eigenvalues, leading to a more complex computation. To simplify the dynamic computation of $A$, inspired by \cite{48}, we introduce the following calculation of the dynamic graph adjacency matrix:

\begin{equation}
A={{I}_{N}}+E{{E}^{T}},\,\,\,E=ReLU(X)
\label{fun2}
\end{equation}

Furthermore, to reduce the complexity of the SGSC, we improve the computation of $F$ in Theorem \ref{the1}. We propose an efficient Linear Dynamic Graph Optimization on Stiefel Manifold (LDGOSM). Inspired by \cite{36}, we introduce a linear transformation:
$F=XW$,
where $W\in \mathbb{R}{{}^{d\times d}}$. By substituting equation (\ref{fun2}) into the equivalent form of (\ref{fun1}) (as proven in Theorem \ref{the1} in Appendix \ref{appA.1}), the optimization problem can be transformed into the following objective function:
\begin{equation}
\begin{aligned}
& \max \quad \text{Tr}(W^T (X^T X + X^T E E^T X) W) \\
& \text{s.t.} \quad \quad \quad \quad \quad W^T X^T X W = I \\
\end{aligned}
\label{fun4}
\end{equation}
The optimal solution $W$ for the objective function (\ref{fun4}) can be obtained through Algorithm \ref{alg1} (see Appendix \ref{appA.3} for detailed derivation).

\begin{algorithm}[tb]
    \caption{Implementation of LDGOSM}
    \label{alg1}
    \textbf{Input}: $X\in {{R}^{n\times d}}$, $E\in {{R}^{n\times d}}$\\
    \textbf{Output}: $W\in {{R}^{d\times d}}$
    \begin{algorithmic}[1] %[1] enables line numbers
        \STATE Perform eigenvalue decomposition on $B={{X}^{T}}X$ to obtain eigenvector matrix $D$ and eigenvalue diagonal matrix $\Lambda $;
        \STATE Let $M=D{{\Lambda }^{-1/2}}{{D}^{T}}$;
        \STATE Compute $C={{E}^{T}}X$;
        \STATE Compute $H=B+{{C}^{T}}C$;
        \STATE Perform eigenvalue decomposition on ${{M}^{T}}HM$ to obtain eigenvector matrix $U$;
        \STATE \textbf{return} $W=MU$;
    \end{algorithmic}
\end{algorithm}

In Algorithm \ref{alg1}, the complexity matrix $B$ and the eigenvalue decomposition are $O(n{{d}^{2}})$ and $O({{d}^{3}})$, respectively. Computing matrices $M$, $C$, and $H$ are $O({{d}^{3}})$, $O(n{{d}^{2}})$, and $O({{d}^{2}})$. Computing ${{M}^{T}}HM$ and the subsequent eigenvalue decomposition are both $O({{d}^{3}})$. Finally, computing matrix $W$ is $O({{d}^{3}})$. The total complexity is $O(n{{d}^{2}}+k{{d}^{3}})$, where $k$ is a small constant. If $d\ll n$, the complexity is dominated by $O(n{{d}^{2}})$, making it approximately linear with nodes number $n$.

\subsubsection{Multi-layer SGSC}

To enhance the effectiveness of dynamic spatio-temporal graph learning, we employ the following Multi-layer SGSC (MSGSC):

\begin{equation}
MSGSC(X,G)=\sum\limits_{i=1}^{m}{X{{*}_{s}}{{G}_{1}}{{*}_{s}}}{{G}_{2}}{{*}_{s}}\cdots {{*}_{s}}{{G}_{i}}
\end{equation}

\noindent
where ${{G}_{i}}$ represents the convolutional kernel of each layer in the MSGSC.

\begin{theorem}
The MSGSC has the following equivalent computational form:
\begin{equation}
MSGSC(X,G)={{S}^{-1}}(\sum\limits_{i=1}^{m}{S(X)}\odot \prod\limits_{j=1}^{i}{S({{G}_{j}})})
\end{equation}
\label{the3}
\end{theorem}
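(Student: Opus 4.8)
The plan is to reduce everything to a single convolution-theorem-style identity for the forward transform and then telescope. The central fact I would establish first is that
\begin{equation}
S(X *_s G) = S(X) \odot S(G),
\end{equation}
which holds because $S(X *_s G) = F^T\!\left(F(F^T X \odot F^T G)\right) = (F^T F)(F^T X \odot F^T G)$, and the defining Stiefel constraint $F^T F = I_d$ collapses the prefactor to the identity, leaving $F^T X \odot F^T G = S(X) \odot S(G)$. This is the workhorse of the whole argument: it says $S$ turns the Stiefel convolution into a plain Hadamard product.

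Next I would iterate this identity by induction on the number of kernels to handle the $i$-fold convolution. Writing the repeated product left-to-right and applying the base identity once per layer gives
\begin{equation}
S\bigl(X *_s G_1 *_s \cdots *_s G_i\bigr) = S(X) \odot \prod_{j=1}^{i} S(G_j),
\end{equation}
where the product denotes repeated Hadamard multiplication. Applying $S^{-1}(\cdot) = F(\cdot)$ to both sides recovers each summand of the MSGSC as $X *_s G_1 *_s \cdots *_s G_i = S^{-1}\!\bigl(S(X) \odot \prod_{j=1}^{i} S(G_j)\bigr)$. The same reasoning carries over verbatim to the multi-dimensional feature case $X, G \in \mathbb{R}^{n\times k}$, since the Stiefel identity and the Hadamard product both act columnwise.

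Finally I would sum over $i$ and use that $S^{-1}$ is a linear map (left multiplication by $F$) to interchange it with the finite sum, yielding
\begin{equation}
MSGSC(X,G) = \sum_{i=1}^{m} S^{-1}\!\Bigl(S(X) \odot \prod_{j=1}^{i} S(G_j)\Bigr) = S^{-1}\!\Bigl(\sum_{i=1}^{m} S(X) \odot \prod_{j=1}^{i} S(G_j)\Bigr),
\end{equation}
which is exactly the claimed form.

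The main obstacle, and the point requiring care, is that $F$ is a \emph{tall} Stiefel matrix ($n \times d$ with $d \le n$), so $F F^T \neq I_n$ in general and consequently $S^{-1}(S(x)) \neq x$. The telescoping must therefore never rely on $S^{-1}S$ being the identity; it uses \emph{only} the one-sided relation $F^T F = I_d$, applied on the correct side inside $S(X *_s G)$. Verifying that this one-sided collapse is all that each inductive step needs — and that associativity of the repeated $*_s$ together with the columnwise action in the multi-feature case is consistent — is the delicate part, after which the computation is routine.
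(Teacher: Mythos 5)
Your proof is correct and follows essentially the same route as the paper's: the convolution-theorem identity $S(X *_s G) = S(X)\odot S(G)$ obtained from $F^TF=I_d$ (the paper's Lemma~1), iterated over the layers, followed by linearity to pull the finite sum inside the transform (the paper's Lemma~2). The only cosmetic difference is that you apply linearity to $S^{-1}$ where the paper applies it to $S$ and then formally inverts; your version is in fact slightly cleaner, since each summand is exhibited directly as $S^{-1}$ of a Hadamard product and you thereby sidestep the fact that $S^{-1}\circ S = FF^{T}$ is the identity only on the column space of $F$ --- a subtlety you correctly flag.
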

The proof of Theorem \ref{the3} is provided in Appendix \ref{appA.4}.

Through Theorem \ref{the3}, when computing the MSGSC, multiple SGFT and ISGFT operations are avoided, thereby improving computational efficiency while retaining the effects of the original multi-step convolution.

In our implementation, $S({{G}_{j}})={{F}^{T}}{{G}_{j}}$ can be learned via a linear fully connected layer (FC), i.e., ${{\hat{G}}_{j}}=FC({{F}^{T}})$. Therefore, the computation of MSGSC can be expressed as:

\begin{equation}
\begin{aligned}
y_{t}^{m}=MSGSC(\tilde{X}_{t}^{m},{F}^{T})={{S}^{-1}}(\sum\limits_{i=1}^{m}{S(\tilde{X}_{t}^{m})}\odot \prod\limits_{j=1}^{i}{{{{\hat{G}}}_{j}}})
\end{aligned}
\end{equation}
\noindent
where, $F$ is optimized using the Algorithm \ref{alg1}. 

\subsection{Concat Module and Linear Layers Module}

In DST-SGNN, the seasonal and trend components $y_{t}^{(1)}$ and $y_{t}^{(2)}$ output from the MSGSC module are processed through inverse RevIN, resulting in $\hat{Y}_{t}^{(1)}$ and $\hat{Y}_{t}^{(2)}$. These features are then concatenated via the Concat module to form ${{\hat{Y}}_{t}}$, which is subsequently fed into the Linear Layers module to generate the final prediction ${{Y}_{t}}$.

\subsection{The Inference and Training of DST-SGNN}
In DST-SGNN, spatio-temporal data is decomposed into patches and split into seasonal and trend components. The LDGOSM module optimizes the transformation matrix $F$ on the Stiefel manifold. The MSGSC module captures complex patterns via graph spectral convolutions. Predictions are generated by concatenating features and passing them through linear layers. The training process involves only standard eigenvalue decomposition, which can be efficiently handled using PyTorch's linalg module, and uses MAE loss for optimization.

\section{Experiments}

\begin{table*}[htbp]
\centering
\footnotesize % 更小的字体大小
\setlength{\tabcolsep}{0.7mm}{
\resizebox{\textwidth}{!}{%
\begin{tabular}{cc|cc|cc|cc|cc|cc|cc|cc|cc|cc}
\hline
\multicolumn{2}{c|}{Model}                                 & \multicolumn{2}{c|}{DST-SGNN}      & \multicolumn{2}{c|}{TimeMixer} & \multicolumn{2}{c|}{TimeXer}      & \multicolumn{2}{c|}{MiTSformer} & \multicolumn{2}{c|}{iTransformer} & \multicolumn{2}{c|}{FourierGNN} & \multicolumn{2}{c|}{FreTS}        & \multicolumn{2}{c|}{Fredformer} & \multicolumn{2}{c}{FITS} \\ \hline
\multicolumn{2}{c|}{Metric}                                & MSE             & MAE             & MSE             & MAE          & MSE             & MAE             & MSE            & MAE            & MSE             & MAE             & MSE            & MAE            & MSE             & MAE             & MSE           & MAE             & MSE            & MAE     \\ \hline
\multicolumn{1}{c|}{\multirow{4}{*}{PEMS03}}         & 96  & {\ul 0.1936}    & \textbf{0.2635} & 0.3607          & 0.4203       & \textbf{0.1820} & 0.2962          & 1.3222         & 0.9412         & 1.7197          & 1.0753          & 0.2978         & 0.4049         & 0.2459          & 0.3503          & 1.1039        & 0.8235          & 0.4239         & 0.5127  \\
\multicolumn{1}{c|}{}                                & 192 & \textbf{0.2122} & \textbf{0.2760} & 0.4477          & 0.4660       & {\ul 0.2431}    & 0.3452          & 1.7115         & 1.0879         & 1.9911          & 1.2053          & 0.3284         & 0.4251         & 0.2808          & 0.3751          & 1.1645        & 0.8395          & 0.3541         & 0.4484  \\
\multicolumn{1}{c|}{}                                & 336 & \textbf{0.2318} & \textbf{0.2910} & 0.3431          & 0.3970       & 0.2889          & 0.3758          & 1.3013         & 0.8972         & 1.3191          & 0.9035          & 0.3134         & 0.4104         & {\ul 0.2618}    & 0.3541          & 0.8229        & 0.6543          & 0.3106         & 0.4087  \\
\multicolumn{1}{c|}{}                                & 720 & 0.3040          & \textbf{0.3506} & 0.4247          & 0.4594       & {\ul 0.3002}    & 0.3800          & 1.5214         & 0.9890         & 1.6422          & 1.0467          & 0.3485         & 0.4354         & \textbf{0.2997} & 0.3842          & 0.8356        & 0.6663          & 0.4988         & 0.5591  \\ \hline
\multicolumn{1}{c|}{\multirow{4}{*}{PEMS07}}         & 60  & 0.4696          & 0.4417          & 0.2905          & 0.3802       & \textbf{0.2101} & \textbf{0.3277} & 0.8547         & 0.7161         & 1.6041          & 1.0468          & 0.2644         & 0.3668         & {\ul 0.2403}    & {\ul 0.3357}    & 0.8227        & 0.6662          & 0.3207         & 0.4345  \\
\multicolumn{1}{c|}{}                                & 96  & 0.4645          & 0.4444          & 0.3826          & 0.4261       & 0.4625          & 0.5196          & 1.2500         & 0.8942         & 1.8446          & 1.1342          & {\ul 0.3178}   & {\ul 0.4109}   & \textbf{0.2860} & \textbf{0.3699} & 1.5280        & 0.9913          & 0.4041         & 0.5043  \\
\multicolumn{1}{c|}{}                                & 192 & \textbf{0.3436} & \textbf{0.3762} & 0.4582          & 0.4739       & 0.5380          & 0.5585          & 1.4574         & 0.9694         & 1.7269          & 1.0716          & 0.3822         & 0.4379         & {\ul 0.3479}    & {\ul 0.4093}    & 1.5146        & 0.9855          & 0.3689         & 0.4674  \\
\multicolumn{1}{c|}{}                                & 336 & 0.3346          & \textbf{0.3675} & 0.3624          & 0.4091       & 0.4565          & 0.4973          & 1.1288         & 0.8114         & 1.4432          & 0.9504          & 0.3462         & 0.4148         & \textbf{0.3129} & {\ul 0.3844}    & 0.8725        & 0.6821          & {\ul 0.3327}   & 0.4399  \\ \hline
\multicolumn{1}{c|}{\multirow{4}{*}{Electricity}}    & 96  & 0.1552          & 0.2448          & 0.1586          & 0.2494       & \textbf{0.1441} & {\ul 0.2438}    & 0.1558         & 0.2483         & {\ul 0.1460}    & \textbf{0.2381} & 0.2931         & 0.3838         & 0.1792          & 0.2666          & 0.2344        & 0.3201          & 0.5628         & 0.6111  \\
\multicolumn{1}{c|}{}                                & 192 & 0.1671          & {\ul 0.2555}    & 0.1706          & 0.2606       & \textbf{0.1618} & 0.2602          & 0.1741         & 0.2634         & {\ul 0.1636}    & \textbf{0.2553} & 0.3030         & 0.3927         & 0.1837          & 0.2731          & 0.2382        & 0.3259          & 0.5974         & 0.6299  \\
\multicolumn{1}{c|}{}                                & 336 & {\ul 0.1806}    & \textbf{0.2707} & 0.1882          & 0.2788       & 0.1829          & 0.2820          & 0.2021         & 0.2893         & \textbf{0.1803} & {\ul 0.2720}    & 0.3173         & 0.4064         & 0.1996          & 0.2905          & 0.2556        & 0.3418          & 0.6450         & 0.6547  \\
\multicolumn{1}{c|}{}                                & 720 & {\ul 0.2133}    & {\ul 0.2999}    & 0.2296          & 0.3159       & 0.2162          & 0.3129          & 0.2332         & 0.3138         & \textbf{0.2082} & \textbf{0.2975} & 0.3456         & 0.4256         & 0.2353          & 0.3244          & 0.4009        & 0.4527          & 0.6437         & 0.6490  \\ \hline
\multicolumn{1}{c|}{\multirow{4}{*}{CSI300}}         & 36  & \textbf{0.1435} & \textbf{0.2182} & 0.2121          & 0.2625       & {\ul 0.1845}    & {\ul 0.2580}    & 0.2574         & 0.2970         & 0.2285          & 0.2862          & 0.4066         & 0.3398         & 0.2148          & 0.2757          & 0.3369        & 0.3655          & 0.2511         & 0.3733  \\
\multicolumn{1}{c|}{}                                & 48  & \textbf{0.1876} & \textbf{0.2487} & 0.3067          & 0.3098       & {\ul 0.2269}    & {\ul 0.2792}    & 0.3401         & 0.3411         & 0.2803          & 0.3159          & 0.6206         & 0.4078         & 0.3078          & 0.3197          & 0.3722        & 0.3863          & 0.5647         & 0.6130  \\
\multicolumn{1}{c|}{}                                & 60  & \textbf{0.2241} & \textbf{0.2717} & 0.3406          & 0.3350       & 0.2648          & {\ul 0.3024}    & 0.5111         & 0.4111         & 0.3762          & 0.3729          & 0.9337         & 0.4849         & 0.6574          & 0.4090          & 0.4013        & 0.4030          & {\ul 0.2586}   & 0.3798  \\
\multicolumn{1}{c|}{}                                & 96  & \textbf{0.2987} & \textbf{0.3259} & 0.5120          & 0.4107       & 0.5740          & 0.4651          & 0.6084         & 0.4585         & {\ul 0.4247}    & {\ul 0.4017}    & 1.2942         & 0.5696         & 1.1953          & 0.5384          & 0.4838        & 0.4507          & 0.6159         & 0.6428  \\ \hline
\multicolumn{1}{c|}{\multirow{4}{*}{exchange\_rate}} & 48  & \textbf{0.0439} & {\ul 0.1464}    & 0.0572          & 0.1631       & 0.0919          & 0.2123          & {\ul 0.0474}   & 0.1486         & 0.0917          & 0.2213          & 0.0726         & 0.1957         & 0.0532          & 0.1634          & 0.0794        & \textbf{0.0794} & 0.1456         & 0.2887  \\
\multicolumn{1}{c|}{}                                & 96  & \textbf{0.0791} & \textbf{0.2020} & 0.1184          & 0.2394       & 0.1155          & 0.2513          & 0.1148         & {\ul 0.2291}   & {\ul 0.1042}    & 0.2338          & 0.1516         & 0.2905         & 0.1218          & 0.2515          & 0.1241        & 0.2519          & 0.2278         & 0.3653  \\
\multicolumn{1}{c|}{}                                & 192 & \textbf{0.1508} & {\ul 0.2860}    & 0.2405          & 0.3454       & 0.2019          & \textbf{0.2019} & {\ul 0.1857}   & 0.2957         & 0.2175          & 0.3448          & 0.2953         & 0.4134         & 0.1915          & 0.3243          & 0.2215        & 0.3399          & 0.4276         & 0.5084  \\
\multicolumn{1}{c|}{}                                & 336 & \textbf{0.2611} & \textbf{0.3812} & 0.4107          & 0.4612       & 0.3794          & 0.4486          & 0.3921         & {\ul 0.4390}   & 0.3879          & 0.4562          & 0.4328         & 0.4985         & 0.4050          & 0.4825          & {\ul 0.3688}  & 0.4437          & 0.6705         & 0.6398  \\ \hline
\multicolumn{1}{c|}{\multirow{4}{*}{Solar}}          & 96  & 0.1802          & \textbf{0.2007} & \textbf{0.1606} & 0.2216       & 0.1677          & 0.2472          & 0.1807         & {\ul 0.2532}   & {\ul 0.1631}    & 0.2191          & 0.3143         & 0.4134         & 0.1955          & 0.2482          & 0.3434        & 0.4379          & 0.7187         & 0.7212  \\
\multicolumn{1}{c|}{}                                & 192 & 0.1822          & \textbf{0.2003} & \textbf{0.1593} & 0.2289       & 0.1937          & 0.2657          & 0.1875         & {\ul 0.2632}   & {\ul 0.1663}    & 0.2226          & 0.3103         & 0.4207         & 0.2020          & 0.2478          & 0.3343        & 0.4409          & 0.7260         & 0.7284  \\
\multicolumn{1}{c|}{}                                & 336 & 0.1849          & \textbf{0.1989} & \textbf{0.1624} & 0.2295       & 0.1745          & 0.2596          & 0.1874         & {\ul 0.2536}   & {\ul 0.1706}    & 0.2272          & 0.3114         & 0.4217         & 0.2038          & 0.2538          & 0.3907        & 0.4743          & 0.6669         & 0.6959  \\
\multicolumn{1}{c|}{}                                & 720 & 0.2021          & \textbf{0.2102} & \textbf{0.1709} & 0.2361       & {\ul 0.1812}    & 0.2722          & 0.1989         & {\ul 0.2647}   & 0.1830          & 0.2323          & 0.4217         & 0.4295         & 0.2323          & 0.2674          & 0.3496        & 0.4652          & 0.7663         & 0.7561  \\ \hline
\multicolumn{1}{c|}{\multirow{4}{*}{METR-LA}}        & 96  & 1.2756          & \textbf{0.6243} & {\ul 1.1316}    & {\ul 0.6483} & 1.4146          & 0.7436          & 1.3261         & 0.7250         & 1.5068          & 0.7609          & 1.1737         & 0.7085         & \textbf{1.0759} & 0.6834          & 1.2441        & 0.6516          & 1.2678         & 0.7873  \\
\multicolumn{1}{c|}{}                                & 192 & 1.3912          & \textbf{0.6619} & 1.3192          & {\ul 0.7097} & 1.5473          & 0.7941          & 1.4698         & 0.7821         & 1.6180          & 0.7923          & {\ul 1.2967}   & 0.7678         & \textbf{1.2428} & 0.7310          & 1.5024        & 0.7636          & 1.3339         & 0.8080  \\
\multicolumn{1}{c|}{}                                & 336 & 1.4780          & \textbf{0.6841} & 1.3539          & {\ul 0.7247} & 1.5242          & 0.7706          & 1.5705         & 0.7883         & 1.6485          & 0.7838          & {\ul 1.3287}   & 0.7842         & \textbf{1.2913} & 0.7513          & 1.5445        & 0.7613          & 1.4403         & 0.8430  \\
\multicolumn{1}{c|}{}                                & 720 & 1.7026          & \textbf{0.7302} & 1.5596          & 0.8309       & 1.7827          & 0.8716          & 1.7972         & 0.8438         & 1.9614          & 0.8813          & {\ul 1.4944}   & 0.8244         & \textbf{1.4582} & {\ul 0.7957}    & 1.8800        & 0.8500          & 1.6439         & 0.9186  \\ \hline
\multicolumn{2}{c|}{avg rank}                              & \textbf{2.7857} & \textbf{1.5000} & 3.8929          & {\ul 3.8214} & {\ul 3.7857}    & 4.2143          & 5.9286         & 5.6786         & 5.5714          & 5.3929          & 5.8571         & 6.1071         & 3.8929          & 4.2143          & 6.7500        & 6.3214          & 6.5357         & 7.7500  \\ \hline
\end{tabular}}}
\caption{Summary of results for all methods on nine datasets.The best
results are in bold and the second best are underlined.} 
\label{tab2}
\end{table*}

\subsection{Datasets and Settings}

\subsubsection{Datasets}

To evaluate the performance of DST-SGNN\footnote{Our code at https://github.com/komorebi424/DST-SGNN.}, this study conducted experiments on seven spatio-temporal benchmark datasets: PEMS03, PEMS07, Electricity, CSI300, exchange\_rate, Solar, and METR-LA. Table \ref{tab1} provides a summary of the dataset statistics, with detailed information on the seven public benchmark datasets as follows:

\begin{itemize}
\item{PEMS03 and PEMS07: These two datasets consist of California highway traffic flow data, collected in real-time every 30 seconds by the California Performance Measurement System (PEMS) \cite{37}. The original traffic flow data is aggregated at 5-minute intervals. The datasets record the geographical information of the sensor stations. PEMS03 and PEMS07 are datasets from two specific regions.}
\item{Electricity: Electricity contains hourly time series of the electricity consumption of 321 customers from 2012 to 2014 \cite{38}.}
\item{CSI300: This is a stock dataset, compiling the closing prices of one hundred stocks that have been continuously listed in the CSI 300 from 2012 to 2024 \cite{64}.}
\item{exchange\_rate: This dataset records the daily exchange rates of eight different countries from 1990 to 2016 \cite{13}.}
\item{Solar\footnote{https://www.nrel.gov/grid/solar-power-data.html.}: The dataset selects power plant data points in Florida as the dataset, which includes 593 points. The data collection period spans from January 6, 2001, to December 31, 2006, with samples taken every hour, totaling 8,760 in length.}
\item{METR-LA: The dataset includes traffic information collected by loop detectors on highways in Los Angeles County from March 1, 2012, to June 30, 2012. It contains data from 207 sensors, with samples taken every 5 minutes. \cite{11}.}
\end{itemize}

We divided all the datasets chronologically into a training set (70\%), a validation set (10\%), and a test set (20\%).

\subsubsection{Experimental Settings}

The DST-SGNN model is implemented in Python using PyTorch 2.2.0 and is trained on a GPU (NVIDIA GeForce RTX 4090). MSE and MAE are used as evaluation metrics. We set the input window size $T$ to 96, 192, or 336, and the learning rate to 0.0001. All of the models follow the same experimental setup with forecasting window sizes \( H \in \{96, 192, 336, 720\} \) for the PEMS03, Electricity, Sola and METR-LA datasets, \( H \in \{48, 96, 192, 336\} \) for the exchange\_rate datasets, \( H \in \{36, 48, 60, 96\} \) for the CSI300 dataset, and \( H \in \{60, 96, 192, 336\} \) for the PEMS07 dataset.

\subsection{Methods for Comparison}

Our experiments comprehensively compared the forecasting performance of our model with several representative state-of-the-art (SOTA) models on seven datasets, including the advanced long-term forecasting models TimeMixer \cite{41} and TimeXer \cite{42}, the Transformer-based model iTransformer \cite{17}, the spatio-temporal forecasting model MiTSformer \cite{44}, and the Fourier transform based models FourierGNN \cite{11}, FreTS \cite{45}, Fredformer \cite{46}, and FITS \cite{47}. We used the code released in the original papers, and all models followed the same experimental settings as described in the original papers.

\subsection{Main Results}

\begin{table*}[htb]
\centering
\resizebox{\textwidth}{!}{%
\begin{tabular}{cc|cccc|cccc|cccc|cccc}
\hline
\multicolumn{2}{c|}{\multirow{2}{*}{Methods}}            & \multicolumn{4}{c|}{PEMS03}                                           & \multicolumn{4}{c|}{Electricity}                                      & \multicolumn{4}{c|}{CSI300}                                           & \multicolumn{4}{c}{METR-LA}                                           \\ \cline{3-18} 
\multicolumn{2}{c|}{}                                    & 96              & 192             & 336             & 720             & 96              & 192             & 336             & 720             & 36              & 48              & 60              & 96              & 96              & 192             & 336             & 720             \\ \hline
\multicolumn{1}{c|}{\multirow{2}{*}{DST-SGNN}}     & MSE & \textbf{0.1936} & \textbf{0.2122} & \textbf{0.2318} & {\ul 0.3040}    & \textbf{0.1552} & \textbf{0.1671} & \textbf{0.1806} & \textbf{0.2133} & \textbf{0.1435} & \textbf{0.1876} & \textbf{0.2241} & \textbf{0.2987} & \textbf{1.2756} & \textbf{1.3912} & \textbf{1.4780} & \textbf{1.7026} \\
\multicolumn{1}{c|}{}                              & MAE & \textbf{0.2635} & \textbf{0.2760} & \textbf{0.2910} & \textbf{0.3506} & \textbf{0.2448} & \textbf{0.2555} & \textbf{0.2707} & \textbf{0.2999} & \textbf{0.2182} & \textbf{0.2487} & \textbf{0.2717} & \textbf{0.3259} & \textbf{0.6243} & \textbf{0.6619} & \textbf{0.6841} & \textbf{0.7302} \\ \hline
\multicolumn{1}{c|}{\multirow{2}{*}{Rep-StdGSC}}    & MSE & {\ul 0.2177}    & 0.2475          & 0.2513          & 0.3304          & 0.2018          & {\ul 0.2026}    & 0.2168          & 0.2494          & {\ul 0.1508}    & 0.2466          & 0.2479          & 0.3850          & 1.3613          & {\ul 1.4401}    & {\ul 1.5140}    & 1.7244          \\
\multicolumn{1}{c|}{}                              & MAE & 0.2973          & 0.3301          & 0.3200          & 0.3803          & 0.2760          & {\ul 0.2818}    & 0.3011          & 0.3286          & {\ul 0.2254}    & 0.2841          & {\ul 0.2846}    & 0.3744          & 0.6420          & 0.6888          & 0.6996          & 0.7428          \\ \hline
\multicolumn{1}{c|}{\multirow{2}{*}{Rep-SpatConv}} & MSE & 0.2180          & {\ul 0.2255}    & 0.2469          & 0.3270          & 0.2021          & 0.2067          & 0.2172          & 0.2535          & 0.1687          & 0.2338          & 0.2507          & {\ul 0.3249}    & {\ul 1.2997}    & 1.5104          & 1.5474          & {\ul 1.7176}    \\
\multicolumn{1}{c|}{}                              & MAE & 0.3043          & 0.3000          & 0.3218          & 0.3798          & 0.2775          & 0.2890          & 0.3024          & 0.3342          & 0.2356          & 0.2748          & 0.2863          & {\ul 0.3421}    & {\ul 0.6377}    & {\ul 0.6843}    & {\ul 0.6974}    & {\ul 0.7347}    \\ \hline
\multicolumn{1}{c|}{\multirow{2}{*}{w/o-Stiefel}}  & MSE & 0.2199          & 0.2297          & {\ul 0.2443}    & \textbf{0.3037} & {\ul 0.2015}    & 0.2029          & {\ul 0.2145}    & {\ul 0.2484}    & 0.1599          & {\ul 0.2142}    & {\ul 0.2411}    & 0.3753          & 1.2766          & 1.4854          & 1.5107          & 1.6990          \\
\multicolumn{1}{c|}{}                              & MAE & {\ul 0.2966}    & {\ul 0.2983}    & {\ul 0.3128}    & {\ul 0.3686}    & {\ul 0.2731}    & 0.2833          & {\ul 0.2970}    & {\ul 0.3270}    & 0.2318          & {\ul 0.2628}    & 0.2859          & 0.3529          & 0.6563          & 0.6924          & 0.6976          & 0.7405          \\ \hline
\end{tabular}}
\caption{Comparative performance of DST-SGNN and variants. The best results are in bold and the second best are underlined.}
\label{tab3}
\end{table*}

Table \ref{tab2} presents the experimental results of DST-SGNN compared with other methods on the seven spatio-temporal datasets, leading to the following explanations:
\begin{itemize}
\item{The DST-SGNN model shows excellent performance on various datasets, with an average rank of 2.7857 for MSE and 1.5000 for MAE. In particular, on the CSI300 and exchange\_rate datasets, the DST-SGNN significantly outperforms other comparative methods across all forecasting window lengths. This result demonstrates that DST-SGNN has a significant advantage in handling financial data and enables more accurate forecasting of stock prices and exchange rates, which is crucial in financial decision-making.}
\item{FreTS based on Fourier transform performs well on the traffic datasets PEMS03, PEMS07, and METR-LA, but underperforms on other datasets. In contrast, iTransformer, which is based on the Transformer architecture, shows better performance on non-traffic datasets and falls short on traffic datasets. The reason is that traffic data exhibit clear periodicity and trends, which can be easily captured by Fourier transform-based methods, leading to better results on traffic datasets. Moreover, transformer-based models excel at modeling long-term dependencies and complex nonlinear patterns on non-periodic data.
 DST-SGNN is superior on all datasets by incorporating the strengths of MSGSC, which effectively extracts complex relationships in both temporal and spatial dimensions. Whether dealing with periodic traffic data or non-periodic financial and power data, DST-SGNN maintains high forecasting accuracy and efficiency.}
\item{TimeXer also performs well on all datasets, but as the forecasting length increases, the advantages of DST-SGNN become increasingly evident. This is attributed to the optimization of SGSC on the Stiefel manifold, which enables the model to more effectively handle the dynamic changes in spatio-temporal data. Additionally, DST-SGNN uses patch-based sequence decomposition to break down the original data into multiple smaller subsequences. This process further reduces the high dimensionality and complexity of the data. As a result, our model can maintain high efficiency and accuracy even with long forecasting windows. These characteristics make the advantages of DST-SGNN more prominent in long forecasting sequences and provide a more efficient and accurate solution for STTS forecasting.}
\end{itemize}

\subsection{Ablation Studies}

To further investigate the specific role of the core module SGSC in DST-SGNN, we conducted ablation studies with the following three variants:
\begin{itemize}
\item{Rep-StdGSC: Variant replacing SGSC with standard graph spectral convolution, using an orthogonal matrix $F \in \mathbb{R}^{n \times n} $ for Graph Fourier Transform to perform spectral convolutions.}
\item{Rep-SpatConv: Variant replacing SGSC with spatial graph convolution, where the convolution operation is defined as $ \sum_{i} A_i X_i W $, aggregating neighboring node features directly in the spatial domain.}
\item{w/o-Stiefel: Variant removing the Stiefel manifold constraint and replacing the original LDGOSM with a multilayer perceptron (MLP).}
\end{itemize}

The experimental results shown in Table \ref{tab3} indicate that the three variants—Rep-StdGSC, Rep-SpatConv, and w/o-Stiefel—perform worse than DST-SGNN, although they still demonstrate the effectiveness of GNNs in spatio-temporal analysis. DST-SGNN's superior performance can be attributed to its usage of the Stiefel manifold, which provides filtering capabilities by discarding low-information eigenvalues. This mechanism not only reduces noise but also enhances the model's ability to capture complex spatio-temporal patterns efficiently. Compared to Rep-StdGSC and Rep-SpatConv, DST-SGNN's other advantage lies in its computational efficiency, which will be further analyzed in the subsequent sections regarding time complexity.

\subsection{Time Consumption Analysis}

\begin{table}[htb]
\centering
\footnotesize % 更小的字体大小
\setlength{\tabcolsep}{0.43mm}{
\begin{tabular}{cc|cccc}
\hline
\multicolumn{2}{c|}{\multirow{2}{*}{Methods}}                                       & \multicolumn{4}{c}{PEMS07}                                         \\ \cline{3-6} 
\multicolumn{2}{c|}{}                                                               & 60             & 96             & 192            & 336             \\ \hline
\multicolumn{1}{c|}{\multirow{3}{*}{DST-SGNN}}      & Train                          & {\ul 236.84}   & {\ul 240.14}   & {\ul 271.23}   & {\ul 304.4}     \\
\multicolumn{1}{c|}{}                              & Inference                      & {\ul 61.7}     & \textbf{65.98} & \textbf{78.78} & \textbf{101.34} \\
\multicolumn{1}{c|}{}                              & \multicolumn{1}{l|}{Parameters} & \textbf{4176K} & \textbf{4213K} & \textbf{4311K} & \textbf{4459K}  \\ \hline
\multicolumn{1}{c|}{\multirow{3}{*}{Rep-StdGSC}}    & Train                          & 696.04         & 701.09         & 720.5          & 742.07          \\
\multicolumn{1}{c|}{}                              & Inference                      & 174.4          & 177.33         & 209.57         & 248.13          \\
\multicolumn{1}{c|}{}                              & \multicolumn{1}{l|}{Parameters} & 225163K        & 225200K        & 225298K        & 225446K         \\ \hline
\multicolumn{1}{c|}{\multirow{3}{*}{Rep-SpatConv}} & Train                          & 698.55         & 682.25         & 798.8          & 699.72          \\
\multicolumn{1}{c|}{}                              & Inference                      & 179.04         & 173.05         & 183.96         & 185.44          \\
\multicolumn{1}{c|}{}                              & \multicolumn{1}{l|}{Parameters} & 76546K         & 76583K         & 76681K         & 76829K          \\ \hline
\multicolumn{1}{c|}{\multirow{3}{*}{MiTSformer}}   & Train                          & 2933.64        & 2668.45        & 2907.72        & 2952.88         \\
\multicolumn{1}{c|}{}                              & Inference                      & 854.95         & 862.67         & 858.43         & 1379.61         \\
\multicolumn{1}{c|}{}                              & \multicolumn{1}{l|}{Parameters} & 6821K          & 6848K          & 6922K          & 7033K           \\ \hline
\multicolumn{1}{c|}{\multirow{3}{*}{Fredformer}}   & Train                          & 966.38         & 950.04         & 1088.01        & 1281.55         \\
\multicolumn{1}{c|}{}                              & Inference                      & 74.01          & 84             & 108.58         & 173.62          \\
\multicolumn{1}{c|}{}                              & \multicolumn{1}{l|}{Parameters} & 74936K         & 119803K        & 239537K        & 419379K         \\ \hline
\multicolumn{1}{c|}{\multirow{3}{*}{iTransformer}} & Train                          & \textbf{98.5}  & \textbf{98.73} & \textbf{97.55} & \textbf{106.78} \\
\multicolumn{1}{c|}{}                              & Inference                      & \textbf{56.31} & {\ul 67.3}     & {\ul 111.35}   & {\ul 182.18}    \\
\multicolumn{1}{c|}{}                              & \multicolumn{1}{l|}{Parameters} & {\ul 6393K}   & {\ul 6411K}    & {\ul 6461K}    & {\ul 6534K}     \\ \hline
\end{tabular}}
\caption{Experimental time consumption (seconds per epoch) and parameter analysis on PEMS07 dataset. The best results are in bold and the second best are underlined.}
\label{tab4}
\end{table}

Table \ref{tab4} presents the experimental time consumption and parameter count for DST-SGNN, Rep-StdGSC, Rep-SpatConv, MiTSformer, Fredformer, and iTransformer on the PEMS07 dataset. The results show that DST-SGNN achieves lower time and space complexity than most methods due to its optimized SGSC. Although DST-SGNN has a longer training time than iTransformer due to the eigenvalue decomposition in the Stiefel manifold optimization, its inference time and parameter count are lower. This is attributed to the reduced complexity of eigenvalue decomposition to $O({{d}^{3}})$ with $d\ll n$, which enables faster inference despite the additional training overhead. In addition, DST-SGNN has the fewest parameters. This highlights DST-SGNN's advantage in spatial complexity, making it more memory-efficient while also reducing inference overhead. For more experimental results, please refer to Appendix \ref{appC}.

\section{Conclusions}

The DST-SGNN model proposed in this paper effectively handles the dynamic changes and high-dimensional characteristics of spatio-temporal data by combining the advantages of graph neural networks and SGSC. DST-SGNN reduces data dimensionality and complexity through patch-based sequence decomposition and optimizes its core module, the SGSC, on the Stiefel manifold, reducing computational complexity while maintaining sensitivity to dynamic spatio-temporal relationships. Experiments show that DST-SGNN outperforms existing methods in forecasting accuracy and computational efficiency on seven public benchmark datasets, particularly in handling financial time series data and long forecasting windows. Our experiments have covered a variety of spatio-temporal data. In the future, we will consider more spatio-temporal application scenarios and further utilize and improve our model to apply it to these scenarios.

\newpage

\section*{Acknowledgments}

This research was supported by the PCL-CMCC Foundation for Science and Innovation (Grant No. 2024ZY2B0050).

%% The file named.bst is a bibliography style file for BibTeX 0.99c
\bibliographystyle{named}
\bibliography{ijcai25.bib}

\appendix

\section{Explanations and Proofs}

\subsection{Proof of Theorem 1}

\label{appA.1}

\setcounter{theorem}{0}
\setcounter{algorithm}{0}

\begin{theorem}
The matrix $F$ can be obtained by solving the eigenvalue decomposition of ${{D}^{-1/2}}A{{D}^{-1/2}}$ and selecting the eigenvectors corresponding to the d largest eigenvalues.
\label{the1A}
\end{theorem}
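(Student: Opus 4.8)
The plan is to reduce the constrained minimization in \eqref{fun1} to a trace-\emph{maximization} problem and then apply the Ky Fan maximum principle (equivalently, the Rayleigh--Ritz characterization of extremal subspaces). First I would exploit the affine structure of the normalized Laplacian. Writing \( M \triangleq D^{-1/2} A D^{-1/2} \) so that \( L = I_n - M \), any \( F \in St(n,d) \) satisfies \( F^T F = I_d \), hence
\begin{equation}
\text{Tr}(F^T L F) = \text{Tr}(F^T F) - \text{Tr}(F^T M F) = d - \text{Tr}(F^T M F).
\end{equation}
Since \( d \) is a constant, minimizing \( \text{Tr}(F^T L F) \) over the Stiefel manifold is exactly equivalent to maximizing \( \text{Tr}(F^T M F) \) subject to \( F^T F = I_d \). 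This is also the equivalent form invoked later in \eqref{fun4}, and it immediately explains why the eigenvectors for the \emph{largest} eigenvalues are the right choice: the largest eigenvalues of \( M \) correspond to the smallest eigenvalues of \( L \).

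Next I would solve the maximization. Because \( M \) is real symmetric it admits an orthonormal eigendecomposition \( M = P \Lambda P^T \) with \( \lambda_1 \ge \lambda_2 \ge \cdots \ge \lambda_n \) and orthonormal eigenvectors \( p_1,\dots,p_n \). A Lagrangian analysis with \( \mathcal{L}(F,\Sigma) = \text{Tr}(F^T M F) - \text{Tr}\big(\Sigma (F^T F - I_d)\big) \) gives the stationarity condition \( M F = F \Sigma \); after symmetrizing \( \Sigma \) and diagonalizing it by a right orthogonal rotation (which, as one checks, leaves both the objective and the constraint \( F^T F = I_d \) invariant), every critical \( F \) has columns that are orthonormal eigenvectors of \( M \), and the objective there equals the sum of the corresponding \( d \) eigenvalues.

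The step needing the most care is upgrading this from a \emph{critical point} to a \emph{global} maximizer, since the Lagrange condition alone only certifies stationarity. The cleanest closure is the Ky Fan theorem, \( \max_{F^T F = I_d} \text{Tr}(F^T M F) = \sum_{i=1}^{d}\lambda_i \), which I would prove by setting \( Z = P^T F \), noting \( Z^T Z = I_d \), and writing \( \text{Tr}(F^T M F) = \sum_{i=1}^n \lambda_i c_i \) where \( c_i = (Z Z^T)_{ii} \in [0,1] \) with \( \sum_{i} c_i = \text{Tr}(Z Z^T) = d \). Maximizing the linear functional \( \sum_i \lambda_i c_i \) over this box-with-sum-constraint places all the weight on the \( d \) largest eigenvalues, a bound attained exactly by \( F = (p_1,\dots,p_d) \). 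Finally I would note that the optimizer is unique up to a right orthogonal rotation (which \eqref{fun1} is invariant under) and up to the choice among tied eigenvalues, so \( F \) is precisely the matrix whose columns are the eigenvectors of \( D^{-1/2} A D^{-1/2} \) for the \( d \) largest eigenvalues, completing the proof.
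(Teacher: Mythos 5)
Your proof is correct and follows essentially the same route as the paper: rewrite \( \text{Tr}(F^T L F) \) as a constant minus \( \text{Tr}(F^T D^{-1/2} A D^{-1/2} F) \) using \( F^T F = I_d \), and thereby reduce the problem to trace maximization over the Stiefel manifold (note the constant is \( \text{Tr}(I_d) = d \), as you write, not \( \text{Tr}(I_n) = n \) as in the appendix). The only difference is that the paper simply asserts that the maximizer is given by the top-\( d \) eigenvectors, whereas you actually close that step with the Ky Fan argument via \( Z = P^T F \) and the diagonal of the projection \( ZZ^T \), which makes your version more complete than the paper's.
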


\begin{proof}

\begin{equation}
\begin{aligned}
  & Tr({{F}^{T}}LF)=Tr({{F}^{T}}({{I}_{n}}-{{D}^{-1/2}}A{{D}^{-1/2}})F) \\ 
 & \,\,\,\,\,\,\,\,\,\,\,\,\,\,\,\,\,\,\,\,\,\,\,\,\,=Tr({{F}^{T}}F-{{F}^{T}}{{D}^{-1/2}}A{{D}^{-1/2}}F) \\ 
 & \,\,\,\,\,\,\,\,\,\,\,\,\,\,\,\,\,\,\,\,\,\,\,\,=Tr({{I}_{n}})-Tr({{F}^{T}}{{D}^{-1/2}}A{{D}^{-1/2}}F) \\ 
\end{aligned}
\end{equation}
\noindent
where $Tr({{I}_{n}})=n$ is a constant, the objective function in Definition 1 can be equivalently transformed into the following optimization problem:
\begin{equation}
\begin{aligned}
  & Max\,\,\,Tr({{F}^{T}}{{D}^{-1/2}}A{{D}^{-1/2}}F) \\ 
 & s.t.\,\,\,\,\,\,\,\,\,\,\,\,\,\,\,\,\,\,\,\,{{F}^{T}}F=I \\ 
\end{aligned}
\end{equation}

The solution to this optimization problem can be obtained by solving the eigenvalue problem of ${{D}^{-1/2}}A{{D}^{-1/2}}$, specifically by selecting the eigenvectors corresponding to the k largest eigenvalues.

\end{proof}

\subsection{Proof of Theorem 2}

\label{appA.2}

\begin{theorem}

The Stiefel Graph Spectral Convolution can be viewed as the following specific filtered graph spectral convolution:

\begin{equation}
x{{*}_{s}}g=P{{g}_{\theta }}(\Lambda ){{P}^{T}}x
\end{equation}
\noindent
where $\Lambda =diag({{\lambda }_{1}},{{\lambda }_{2}},\cdots ,{{\lambda }_{n}})$, and ${{\lambda }_{1}},{{\lambda }_{2}},\cdots ,{{\lambda }_{n}}$ are the eigenvalues of $A$ sorted in descending order. $P=({{p}_{1}},{{p}_{2}},\cdots ,{{p}_{n}})$, where ${{p}_{i}}\in {{R}^{n\times 1}}$ is the eigenvector corresponding to ${{\lambda }_{i}}$. ${{g}_{\theta }}(\Lambda )=diag(\theta )$, and $\theta =({{\theta }_{1}},{{\theta }_{2}},\cdots ,{{\theta }_{n}})$ satisfies:
\begin{equation}
\theta_i = 
\begin{cases} 
p_i^T g & \text{if } \lambda_i \ge \lambda_d \\
0 & \text{if } \lambda_i < \lambda_d 
\end{cases}
\end{equation}
\label{the2A}
\end{theorem}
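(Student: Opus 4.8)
The plan is to start from the definition $x *_s g = F(F^T x \odot F^T g)$, collapse it into a single linear operator acting on $x$, and then match that operator against the standard filtered graph spectral convolution form $P g_\theta(\Lambda) P^T$. The one idea that drives the whole argument is that Hadamard multiplication by a fixed vector is identical to left-multiplication by the diagonal matrix built from that vector, i.e. $a \odot b = \mathrm{diag}(b)\,a$; applying this turns the bilinear-looking definition into a clean sandwich product whose spectral interpretation is transparent.

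First I would invoke Theorem \ref{the1A} to fix notation: $F = (p_1,\dots,p_d)$ is exactly the block of the first $d$ columns of the full orthogonal eigenvector matrix $P = (p_1,\dots,p_n)$, where the $p_i$ are eigenvectors of the matrix diagonalized in Theorem \ref{the1A} and $\lambda_1 \ge \lambda_2 \ge \cdots \ge \lambda_n$. Writing $\hat g = F^T g$, the identity $F^T x \odot \hat g = \mathrm{diag}(\hat g)\,F^T x$ gives $x *_s g = F\,\mathrm{diag}(F^T g)\,F^T x$. Expanding this product column by column yields the finite sum $x *_s g = \sum_{i=1}^{d} (p_i^T g)(p_i^T x)\,p_i$, since $F\,\mathrm{diag}(F^T g)\,F^T = \sum_{i=1}^{d} (p_i^T g)\,p_i p_i^T$.

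Next I would extend this $d$-term sum to a full $n$-term sum at no cost: defining $\theta_i = p_i^T g$ for $i \le d$ and $\theta_i = 0$ for $i > d$, the appended terms vanish identically, so $x *_s g = \sum_{i=1}^{n} \theta_i (p_i^T x)\,p_i = P\,\mathrm{diag}(\theta)\,P^T x = P\,g_\theta(\Lambda)\,P^T x$. To finish I would translate the index cutoff into the spectral cutoff: because the eigenvalues are sorted in descending order, $i \le d$ is equivalent to $\lambda_i \ge \lambda_d$, which reproduces exactly the piecewise definition of $\theta_i$ claimed in the statement.

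The main thing to handle carefully is not a genuine obstacle but the bookkeeping between the truncated basis $F \in \mathbb{R}^{n\times d}$ and the full basis $P \in \mathbb{R}^{n\times n}$: one must justify that padding the sum from $d$ to $n$ terms is legitimate precisely because the extra filter coefficients are zero, and that the descending ordering makes the combinatorial cutoff $i \le d$ coincide with the spectral cutoff $\lambda_i \ge \lambda_d$. A secondary point worth flagging is well-definedness of the top-$d$ selection when eigenvalues are tied at the threshold $\lambda_d$; I would note that the construction assumes an unambiguous choice of the $d$ leading eigenvectors, consistent with Theorem \ref{the1A}, and emphasize that the resulting filter $g_\theta(\Lambda)$ is data-dependent through the coefficients $p_i^T g$, which is what certifies that SGSC is a valid, truncated instance of graph spectral convolution.
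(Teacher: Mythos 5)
Your proposal is correct and follows essentially the same route as the paper's proof: both expand $F(F^Tx\odot F^Tg)$ into the rank-one sum $\sum_{i=1}^{d}(p_i^Tg)(p_i^Tx)\,p_i$ and identify it with $P\,g_\theta(\Lambda)\,P^Tx$ by noting that the filter coefficients beyond index $d$ vanish. Your use of the identity $a\odot b=\mathrm{diag}(b)\,a$ to collapse the definition into $F\,\mathrm{diag}(F^Tg)\,F^Tx$ is just a cleaner packaging of the entrywise expansion the paper carries out on both sides.
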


\begin{proof}

According to the conditions, ${{\lambda }_{1}},{{\lambda }_{2}},\cdots ,{{\lambda }_{n}}$ are the eigenvalues of $A$ sorted in descending order. Let $f=({{f}_{1}},\cdots ,{{f}_{d}})$. By Theorem \ref{the1A}, ${{f}_{i}}(i=1,\cdots ,d)$ is the eigenvector corresponding to ${{\lambda }_{i}}$, then:

\begin{equation}
\begin{aligned}
  & x{{*}_{s}}g=F({{F}^{T}}x\odot {{F}^{T}}g) \\ 
 & \text{         }=({{f}_{1}},{{f}_{2}},\cdots ,{{f}_{d}})\left( \left( \begin{matrix}
   f_{1}^{T}x  \\
   f_{2}^{T}x  \\
   \vdots   \\
   f_{d}^{T}x  \\
\end{matrix} \right)\odot \left( \begin{matrix}
   f_{1}^{T}g  \\
   f_{2}^{T}g  \\
   \vdots   \\
   f_{d}^{T}g  \\
\end{matrix} \right) \right) \\ 
 & \text{         }=({{f}_{1}},{{f}_{2}},\cdots ,{{f}_{d}})\left( \begin{matrix}
   (f_{1}^{T}x)\cdot (f_{1}^{T}g)  \\
   (f_{2}^{T}x)\cdot (f_{2}^{T}g)  \\
   \vdots   \\
   (f_{d}^{T}x)\cdot (f_{d}^{T}g)  \\
\end{matrix} \right) \\ 
 & \text{         }=\sum\limits_{i=1}^{d}{{{f}_{i}}((f_{2}^{T}x)\cdot (f_{2}^{T}g))} \\ 
\end{aligned}
\end{equation}

Given ${{g}_{\theta }}(\Lambda )=diag(\theta )$, where $\theta =({{\theta }_{1}},{{\theta }_{2}},\cdots ,{{\theta }_{n}})$ and ${{\theta }_{i}}=\left\{ \begin{matrix}
   p_{i}^{T}G,{{\lambda }_{i}}\ge {{\lambda }_{d}}  \\
   0,{{\lambda }_{i}}<{{\lambda }_{d}}  \\
\end{matrix} \right.$. Let $P=({{p}_{1}},{{p}_{2}},\cdots ,{{p}_{n}})$, where ${{p}_{i}}(i=1,\cdots ,n)$ is the eigenvector corresponding to ${{\lambda }_{i}}$. It is clear that ${{f}_{i}}={{p}_{i}},i=1,\cdots ,d$. Let ${{g}_{\theta }}(\Lambda )=diag(\theta )$, then we have:
\begin{equation}
\begin{aligned}
  & \text{\,\,\,\,\,\,\,\,}P{{g}_{\theta }}(\Lambda ){{P}^{T}}x \\ 
 & =({{p}_{1}},{{p}_{2}},\cdots ,{{p}_{n}})\left( \begin{matrix}
   p_{1}^{T}g & \cdots  & 0 & 0 & \cdots  & 0  \\
   \vdots  & \ddots  & \vdots  & \vdots  & \ddots  & \vdots   \\
   0 & \cdots  & p_{d}^{T}g & 0 & \cdots  & 0  \\
   0 & \cdots  & 0 & 0 & \cdots  & 0  \\
   \vdots  & \ddots  & \vdots  & \vdots  & \ddots  & \vdots   \\
   0 & \cdots  & 0 & 0 & \cdots  & 0  \\
\end{matrix} \right)\left( \begin{matrix}
   p_{1}^{T}x  \\
   p_{2}^{T}x  \\
   \vdots   \\
   p_{d}^{T}x  \\
   \vdots   \\
   p_{n}^{T}x  \\
\end{matrix} \right) \\ 
 & =({{p}_{1}},{{p}_{2}},\cdots ,{{p}_{n}})\left( \begin{matrix}
   (p_{1}^{T}x)\cdot (p_{1}^{T}g)  \\
   (p_{2}^{T}x)\cdot (p_{2}^{T}g)  \\
   \vdots   \\
   (p_{d}^{T}x)\cdot (p_{d}^{T}g)  \\
   0  \\
   \vdots   \\
   0  \\
\end{matrix} \right) \\ 
 & =\sum\limits_{i=1}^{d}{{{p}_{i}}}((p_{i}^{T}x)\cdot (p_{i}^{T}g)) \\ 
\end{aligned}
\end{equation}

Given ${{f}_{i}}={{p}_{i}},i=1,\cdots ,d$, we can derive that $x{{*}_{S}}g=P{{g}_{\theta }}(\Lambda ){{P}^{T}}x$.

\end{proof}

\subsection{Proof of Algorithm 1}

\label{appA.3}

\begin{algorithm}[h]
    \caption{Stiefel manifold optimization algorithm}
    \label{alg1A}
    \textbf{Input}: $X\in {{R}^{n\times d}}$, $E\in {{R}^{n\times d}}$\\
    \textbf{Output}: $W\in {{R}^{d\times d}}$
    \begin{algorithmic}[1] %[1] enables line numbers
        \STATE Perform eigenvalue decomposition on $B={{X}^{T}}X$ to obtain eigenvector matrix $D$ and eigenvalue diagonal matrix $\Lambda $;
        \STATE Let $M=D{{\Lambda }^{-1/2}}{{D}^{T}}$;
        \STATE Compute $C={{E}^{T}}X$;
        \STATE Compute $H=B+{{C}^{T}}C$;
        \STATE Perform eigenvalue decomposition on ${{M}^{T}}HM$ to obtain eigenvector matrix $U$;
        \STATE \textbf{return} $W=MU$;
    \end{algorithmic}
\end{algorithm}

\begin{proof}
Let $D$ and $\Lambda $ be the eigenvector matrix and the diagonal matrix of eigenvalues of the matrix ${{X}^{T}}X$, respectively, such that ${{X}^{T}}X=D\Lambda {{D}^{T}}$. Taking $M=D{{\Lambda }^{-1/2}}{{D}^{T}}$, and computing $W=MU$, we have:

\begin{equation}
\begin{aligned}
U={{M}^{-1}}W=D{{\Lambda }^{1/2}}{{D}^{T}}W
\end{aligned}
\end{equation}

and:

\begin{equation}
\begin{aligned}
  & {{U}^{T}}U={{W}^{T}}D{{\Lambda }^{1/2}}{{D}^{T}}D{{\Lambda }^{1/2}}{{D}^{T}}W\,={{W}^{T}}D\Lambda {{D}^{T}}W \\ 
 & \,\,\,\,\,\,\,\,\,\,\,\,\,={{W}^{T}}{{X}^{T}}XW=I \\ 
\end{aligned}
\end{equation}

Substituting $W=MU$ and ${{U}^{T}}U=I$ into the objective function (corresponding to equation (11) in the main manuscript):
\begin{equation}
\begin{aligned}
& \max \quad \text{Tr}(W^T (X^T X + X^T E E^T X) W) \\
& \text{s.t.} \quad \quad \quad \quad \quad W^T X^T X W = I \\
\end{aligned}
\end{equation}

The optimization problem for $W$ is transformed into the following optimization problem for $U$:

\begin{equation}
\begin{aligned}
  & \max Tr({{U}^{T}}{{M}^{T}}({{X}^{T}}X+{{X}^{T}}E{{E}^{T}}X)MU) \\ 
 & \,\,\,\,\,\,\,\,\,\,\,\,\,\,\,\,\,\,\,\,\,\,\,\,\,\,\,\,\,\,\,\,\,\,\,\, s.t.  \,\,\,{{U}^{T}}U=I \\ 
\end{aligned}
\end{equation}

Let $H={{X}^{T}}X+{{X}^{T}}E{{E}^{T}}X$, then the above optimization problem is equivalent to finding the eigenvector matrix $U$ of ${{M}^{T}}HM$.

\end{proof}

\subsection{Proof of Theorem 3}

\label{appA.4}

To prove Theorem \ref{the3}, we first introduce two lemmas concerning the properties of the Stiefel Graph Fourier Transform.

\begin{lemma}
The Stiefel Graph Fourier Transform satisfies:

\begin{equation}
\begin{aligned}
S\left( X{{*}_{s}}{{G}_{1}}{{*}_{s}}{{G}_{2}}{{*}_{s}}\cdots {{*}_{s}}{{G}_{i}} \right)=S(X)\odot \prod\limits_{j=1}^{i}{S(}{{G}_{j}})
\end{aligned}
\end{equation}

\end{lemma}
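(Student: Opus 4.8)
The plan is to prove the identity by induction on the number $i$ of convolution kernels, the essential mechanism being the orthonormality $F^{T}F=I_{d}$ guaranteed by the Stiefel constraint $F\in St(n,d)$.

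First I would verify the base case $i=1$ directly. Unfolding the definitions of the SGFT and the SGSC gives $S(X{{*}_{s}}{{G}_{1}})=F^{T}(X{{*}_{s}}{{G}_{1}})=F^{T}F(F^{T}X\odot F^{T}{{G}_{1}})$, and since $F^{T}F=I_{d}$ the leading factor collapses, leaving $F^{T}X\odot F^{T}{{G}_{1}}=S(X)\odot S({{G}_{1}})$. This already exhibits the key cancellation on which the whole argument rests.

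For the inductive step, assuming the identity for $i-1$ kernels, I would abbreviate $Y=X{{*}_{s}}{{G}_{1}}{{*}_{s}}\cdots {{*}_{s}}{{G}_{i-1}}$ so that the full $i$-fold convolution equals $Y{{*}_{s}}{{G}_{i}}$. Applying the base case yields $S(Y{{*}_{s}}{{G}_{i}})=S(Y)\odot S({{G}_{i}})$; substituting the inductive hypothesis $S(Y)=S(X)\odot \prod_{j=1}^{i-1}S({{G}_{j}})$ and using the associativity and commutativity of the Hadamard product produces $S(X)\odot \prod_{j=1}^{i}S({{G}_{j}})$, which closes the induction.

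I expect no genuine obstacle here: the argument hinges entirely on the single observation that the composition $S\circ {{S}^{-1}}$ acts as the identity on $\mathbb{R}^{d\times k}$, which follows from $F^{T}F=I_{d}$ (note that $F{{F}^{T}}\ne I_{n}$ when $d<n$, so ${{S}^{-1}}\circ S$ is \emph{not} the identity, but only the direction used above is needed). The remaining manipulations are routine bookkeeping with the Hadamard product, so the main care required is tracking the convolution order and confirming that all Hadamard products are taken between matrices of matching dimension $d\times k$.
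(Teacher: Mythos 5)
Your proof is correct and follows essentially the same route as the paper's: establish the single-kernel identity $S(X*_{s}G)=S(X)\odot S(G)$ via the cancellation $F^{T}F=I_{d}$, then iterate by peeling off the last kernel. The only difference is presentational --- you make the induction and the role of the Stiefel constraint explicit, whereas the paper states the one-step identity without showing the cancellation and then writes ``$=\cdots=$'' for the iteration.
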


\begin{proof}
Given $X{{*}_{S}}G=F({{F}^{T}}X\odot {{F}^{T}}G)$, it follows that $S(X{{*}_{S}}G)=S(X)\odot S(G)$, therefore:

\begin{equation}
\begin{aligned}
  & \text{\,\,\,\,\,\,\,   }S\left( X{{*}_{s}}{{G}_{1}}{{*}_{s}}{{G}_{2}}{{*}_{s}}\cdots {{*}_{s}}{{G}_{i}} \right) \\ 
 & =S\left( X{{*}_{s}}{{G}_{1}}{{*}_{s}}{{G}_{2}}{{*}_{s}}\cdots {{*}_{s}}{{G}_{i-1}} \right)\odot S({{G}_{i}}) \\ 
 & =\cdots =S(X)\odot \prod\limits_{j=1}^{i}{S(}{{G}_{j}}) \\ 
\end{aligned}
\end{equation}

\end{proof}

\begin{lemma}
The Stiefel Graph Fourier Transform satisfies:

\begin{equation}
\begin{aligned}
S\left( \sum\limits_{i=1}^{m}{{{Y}_{i}}} \right)=\sum\limits_{i=1}^{m}{S({{Y}_{i}})}
\end{aligned}
\end{equation}

\end{lemma}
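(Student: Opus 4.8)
The plan is to invoke the definition of the Stiefel Graph Fourier Transform directly and reduce the claim to the elementary linearity of a fixed matrix multiplication. Recall from the main text that $S(x) = F^{T}x$, so $S$ is nothing more than left-multiplication by the fixed matrix $F^{T}$, where $F$ is the optimized transformation matrix on the Stiefel manifold. Consequently, the identity to be proved is simply the additivity of this linear operator, which follows from the distributive law of matrix multiplication over matrix addition.

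First I would rewrite the left-hand side using the definition of $S$, obtaining $S\bigl(\sum_{i=1}^{m} Y_{i}\bigr) = F^{T}\bigl(\sum_{i=1}^{m} Y_{i}\bigr)$. Next I would distribute $F^{T}$ across the finite sum, since matrix multiplication distributes over addition, giving $F^{T}\bigl(\sum_{i=1}^{m} Y_{i}\bigr) = \sum_{i=1}^{m} F^{T}Y_{i}$. Finally, re-identifying each summand $F^{T}Y_{i}$ as $S(Y_{i})$ via the definition of $S$ yields $\sum_{i=1}^{m} S(Y_{i})$, completing the chain of equalities. This mirrors the structure of the preceding lemma, where the multiplicative interaction with the Hadamard product was handled; here only the additive structure is involved.

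Since every step is a direct application of the definition of $S$ together with the distributivity of matrix products over sums, there is no genuine technical obstacle. The only point meriting (minimal) care is that the sum is finite, so no convergence or limit-exchange argument is needed; if one wishes to be fully rigorous, distributivity over the finite sum can be justified by a trivial induction on $m$, with base case $m=1$ immediate and inductive step $F^{T}\bigl(\sum_{i=1}^{m} Y_{i} + Y_{m+1}\bigr) = F^{T}\bigl(\sum_{i=1}^{m} Y_{i}\bigr) + F^{T}Y_{m+1}$. I expect the entire argument to occupy only a line or two.
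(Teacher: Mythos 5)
Your proposal is correct and is essentially identical to the paper's own proof: both unfold the definition $S(x)=F^{T}x$, distribute $F^{T}$ over the finite sum by linearity of matrix multiplication, and re-identify each term as $S(Y_i)$. No further comparison is needed.
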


\begin{proof}

\begin{equation}
\begin{aligned}
 S\left( \sum\limits_{i=1}^{m}{{{Y}_{i}}} \right)={{F}^{T}}\left( \sum\limits_{i=1}^{m}{{{Y}_{i}}} \right)=\sum\limits_{i=1}^{m}{{{F}^{T}}{{Y}_{i}}}=\sum\limits_{i=1}^{m}{S({{Y}_{i}})}
\end{aligned}
\end{equation}

\end{proof}

\begin{theorem}
The MSGSC has the following equivalent computational form:
\begin{equation}
MSGSC(X,G)={{S}^{-1}}(\sum\limits_{i=1}^{m}{S(X)}\odot \prod\limits_{j=1}^{i}{S({{G}_{j}})})
\end{equation}
\label{the3A}
\end{theorem}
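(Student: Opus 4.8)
The plan is to reduce the multi-step convolution term by term to its spectral representation using Lemma 1, and then reassemble the sum using the linearity of the inverse transform. First I would observe that each summand of $MSGSC(X,G)$, namely the $i$-step convolution $Y_i \triangleq X *_s G_1 *_s \cdots *_s G_i$, is itself the output of an SGSC and therefore lies in the range of $F$: by the definition $X *_s G = F(F^T X \odot F^T G)$, every such term has the form $F z_i$ for some vector $z_i$. This matters because the composition $S^{-1} \circ S$ acts as $Y \mapsto F F^T Y$, which is \emph{not} the identity on all of $\mathbb{R}^n$ but only the orthogonal projection onto the range of $F$. Using $F^T F = I_d$ from the Stiefel constraint, I would check that $F F^T (F z_i) = F(F^T F) z_i = F z_i$, so that the recovery property $S^{-1}(S(Y_i)) = Y_i$ holds for each summand.

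With this in hand, I would apply Lemma 1 to each summand to write
\[
Y_i = S^{-1}\bigl(S(Y_i)\bigr) = S^{-1}\Bigl(S(X) \odot \prod_{j=1}^{i} S(G_j)\Bigr),
\]
converting each spatial-domain convolution into an elementwise product in the spectral domain. Summing over $i = 1,\ldots,m$ then gives $MSGSC(X,G) = \sum_{i=1}^m S^{-1}\bigl(S(X) \odot \prod_{j=1}^i S(G_j)\bigr)$. Finally I would pull the summation inside $S^{-1}$ by linearity: since $S^{-1}(x) = Fx$ is a linear map, the same argument used for Lemma 2 yields $\sum_{i=1}^m S^{-1}(Z_i) = S^{-1}\bigl(\sum_{i=1}^m Z_i\bigr)$, which produces the claimed identity.

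The only genuine subtlety, and hence the step I would treat most carefully, is the non-invertibility of $F$: because $S^{-1} \circ S = FF^T \neq I_n$ in general, the relation $S^{-1}(S(\cdot)) = \cdot$ cannot be invoked blindly, and the argument genuinely relies on each convolution output already living in the range of $F$. Once that is established, everything else is a routine application of the two lemmas together with the linearity of $S^{-1}$.
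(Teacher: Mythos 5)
Your proof is correct and follows essentially the same route as the paper's: Lemma~1 for the multiplicativity of $S$ over iterated convolutions, plus linearity of the (inverse) transform to handle the sum. The only difference is that you explicitly justify the recovery step $S^{-1}(S(Y_i)) = Y_i$ by noting that each convolution output lies in the range of $F$, so that $FF^{T}$ acts as the identity on it --- a point the paper's proof passes over silently when it moves from $S(MSGSC(X,G)) = \sum_{i=1}^{m} S(X)\odot\prod_{j=1}^{i} S(G_j)$ to the claimed identity.
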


\begin{proof}

\begin{equation}
\begin{aligned}  
& \text{\,\,\,}S\left( \sum\limits_{i=1}^{m}{X{{*}_{s}}{{G}_{1}}{{*}_{s}}{{G}_{2}}{{*}_{s}}\cdots {{*}_{s}}{{G}_{i}}} \right) \\ 
 & =\sum\limits_{i=1}^{m}{S(X{{*}_{s}}{{G}_{1}}{{*}_{s}}{{G}_{2}}{{*}_{s}}\cdots {{*}_{s}}{{G}_{i}})} \\ 
 & =\sum\limits_{i=1}^{m}{S(X)}\odot \prod\limits_{j=1}^{i}{S({{G}_{j}})} \\ 
\end{aligned}
\end{equation}

Therefore, 

\begin{equation}
\begin{aligned}
  & \text{\,\,\,\,\,\,\,} MSGSC(X,G) \\ 
 & =\sum\limits_{i=1}^{m}{X{{*}_{s}}{{G}_{1}}{{*}_{s}}{{G}_{2}}{{*}_{s}}\cdots {{*}_{s}}{{G}_{i}}} \\ 
 & ={{S}^{-1}}\left( \sum\limits_{i=1}^{m}{S(X)}\odot \prod\limits_{j=1}^{i}{S({{G}_{j}})} \right) \\ 
\end{aligned}
\end{equation}

\end{proof}

\section{Parameter Analysis}

\label{appB.1}

\subsection{The impact of the dimension of matrix \texorpdfstring{$F$}{F}}

\begin{table}[ht]
\centering
\footnotesize % 更小的字体大小
\setlength{\tabcolsep}{0.7mm}{
\begin{tabular}{cl|cc|cc}
\hline
\multicolumn{2}{c|}{\multirow{2}{*}{Methods}} & \multicolumn{2}{c|}{CSI300}       & \multicolumn{2}{c}{Solar}         \\ \cline{3-6} 
\multicolumn{2}{c|}{}                         & MAE             & MSE             & MAE             & MSE             \\ \hline
\multicolumn{2}{c|}{d\_model=64}              & 0.3438          & 0.3419          & 0.2027          & {\ul 0.2132}    \\ \hline
\multicolumn{2}{c|}{d\_model=128}             & \textbf{0.2987} & \textbf{0.3259} & \textbf{0.1802} & \textbf{0.2007} \\ \hline
\multicolumn{2}{c|}{d\_model=256}             & {\ul 0.3022}    & {\ul 0.3281}    & 0.1919          & 0.2150          \\ \hline
\multicolumn{2}{c|}{d\_model=512}             & 0.3057          & 0.3281          & {\ul 0.1916}    & 0.2077          \\ \hline
\end{tabular}
}
\caption{The impact of the dimension of matrix $F$ on the results under a prediction length of 96, evaluated on the CSI300 and Solar datasets. The best results are in bold and the second best are underlined.}
\label{tab5}
\end{table}

As shown in Table \ref{tab5}, the impact of the dimension of matrix $F$ on the results under a prediction length of 96 was evaluated on the CSI300 and Solar datasets. Among the tested dimensions (64, 128, 256, and 512), a dimension of 128 was found to be optimal. This suggests that a moderate dimensionality of 128 provides a good balance between model capacity and computational efficiency, allowing the model to capture essential features without excessive complexity or computational overhead.

\subsection{The impact of the number of \texorpdfstring{$HP$}{HP} stacking layers}

\label{appB.2}

\begin{table}[ht]
\centering
\footnotesize % 更小的字体大小
\setlength{\tabcolsep}{0.7mm}{
\begin{tabular}{cl|cc|cc}
\hline
\multicolumn{2}{c|}{\multirow{2}{*}{Methods}} & \multicolumn{2}{c|}{CSI300}             & \multicolumn{2}{c}{Solar}         \\ \cline{3-6} 
\multicolumn{2}{c|}{}                         & MAE                   & MSE             & MAE             & MSE             \\ \hline
\multicolumn{2}{c|}{1}                        & 0.3501                & 0.3457          & 0.2036          & 0.2307          \\ \hline
\multicolumn{2}{c|}{2}                        & {\ul 0.2987}          & \textbf{0.3259} & \textbf{0.1802} & \textbf{0.2007} \\ \hline
\multicolumn{2}{c|}{3}                        & {\ul \textbf{0.2914}} & {\ul 0.3308}    & {\ul 0.1918}    & {\ul 0.2031}    \\ \hline
\multicolumn{2}{c|}{4}                        & 0.3330                & 0.3407          & 0.1936          & 0.2107          \\ \hline
\end{tabular}
}
\caption{The impact of the number of $HP$ stacking layers on the results under a prediction length of 96, evaluated on the CSI300 and Solar datasets. The best results are in bold and the second best are underlined.}
\label{tab6}
\end{table}

As shown in Table \ref{tab6}, under a prediction length of 96, the impact of the number of $HP$ stacking layers on the results was evaluated on the CSI300 and Solar datasets. Among the tested configurations (1, 2, 3, and 4 layers), two stacking layers were found to be optimal. This indicates that a moderate number of stacking layers can effectively capture the complex spatio-temporal dependencies in the data without introducing unnecessary model complexity or computational burden. Two layers provide sufficient depth to model the underlying patterns while maintaining efficiency.

\section{Time Consumption Analysis}

\label{appC}

\begin{table}[ht]
\centering
\footnotesize 
\setlength{\tabcolsep}{0.7mm}{
\begin{tabular}{cc|cccc}
\hline
\multicolumn{2}{c|}{\multirow{2}{*}{Methods}}                  & \multicolumn{4}{c}{Solar}                                         \\ \cline{3-6} 
\multicolumn{2}{c|}{}                                          & 96             & 192            & 336            & 720            \\ \hline
\multicolumn{1}{c|}{\multirow{3}{*}{DST-SGNN}}     & Train     & \textbf{58.09} & \textbf{62.05} & \textbf{62.86} & \textbf{66.53} \\
\multicolumn{1}{c|}{}                              & Inference & \textbf{15.18} & \textbf{17.59} & \textbf{17.65} & \textbf{21.1}  \\
\multicolumn{1}{c|}{}                              & Parameter & \textbf{3318K} & \textbf{3417K} & \textbf{3564K} & \textbf{3958K} \\ \hline
\multicolumn{1}{c|}{\multirow{3}{*}{MiTSformer}}   & Train     & 444.37         & 439.62         & 436.22         & 415.8          \\
\multicolumn{1}{c|}{}                              & Inference & 130.44         & 123.9          & 125.62         & 90.06s         \\
\multicolumn{1}{c|}{}                              & Parameter & 6848K          & 6922K          & 7033K          & 7329K          \\ \hline
\multicolumn{1}{c|}{\multirow{3}{*}{Fredformer}}   & Train     & 229.37         & 216.84         & 311.81         & 890.46         \\
\multicolumn{1}{c|}{}                              & Inference & {\ul 19.05}    & {\ul 21.8}     & {\ul 31.32}    & {\ul 58.89}    \\
\multicolumn{1}{c|}{}                              & Parameter & 160741K        & 321375K        & 562567K        & 605942K        \\ \hline
\multicolumn{1}{c|}{\multirow{3}{*}{iTransformer}} & Train     & {\ul 98.5}     & {\ul 98.73}    & {\ul 97.55}    & {\ul 106.78}   \\
\multicolumn{1}{c|}{}                              & Inference & 56.31          & 67.3           & 111.35         & 182.18         \\
\multicolumn{1}{c|}{}                              & Parameter & {\ul 6411K}    & {\ul 6461K}    & {\ul 6534K}    & {\ul 6534K}    \\ \hline
\end{tabular}}
\caption{Experimental time consumption (seconds per epoch) and parameter analysis on Solar dataset. The best results are in bold and the second best are underlined.}
\label{tab7}
\end{table}

\begin{table}[ht]
\centering
\footnotesize 
\setlength{\tabcolsep}{0.7mm}{
\begin{tabular}{cc|cccc}
\hline
\multicolumn{2}{c|}{\multirow{2}{*}{Methods}}                  & \multicolumn{4}{c}{PEMS03}                                        \\ \cline{3-6} 
\multicolumn{2}{c|}{}                                          & 96             & 192            & 336            & 720            \\ \hline
\multicolumn{1}{c|}{\multirow{3}{*}{DST-SGNN}}     & Train     & {\ul 171.94}   & {\ul 177.98}   & {\ul 186.97}   & {\ul 200.65s}  \\
\multicolumn{1}{c|}{}                              & Inference & 45.14          & 54.8           & 67.81          & 93.17          \\
\multicolumn{1}{c|}{}                              & Parameter & \textbf{2594K} & \textbf{2692K} & \textbf{2840K} & \textbf{3233K} \\ \hline
\multicolumn{1}{c|}{\multirow{3}{*}{MiTSformer}}   & Train     & 1105.19        & 1118.38        & 1101.96        & 1098.73        \\
\multicolumn{1}{c|}{}                              & Inference & 318.67         & 334.34         & 319.64         & 310.16         \\
\multicolumn{1}{c|}{}                              & Parameter & 6848K          & 6922K          & 7033K          & 7329K          \\ \hline
\multicolumn{1}{c|}{\multirow{3}{*}{Fredformer}}   & Train     & 598.41         & 558.08         & 557.65         & 602.74         \\
\multicolumn{1}{c|}{}                              & Inference & {\ul 39.54}    & {\ul 47.69}    & {\ul 52.95}    & {\ul 69.71}    \\
\multicolumn{1}{c|}{}                              & Parameter & {\ul 48688K}   & {\ul 62153K}   & 108995K        & 367347K        \\ \hline
\multicolumn{1}{c|}{\multirow{3}{*}{iTransformer}} & Train     & \textbf{22.93} & \textbf{23.45} & \textbf{24.67} & \textbf{28.73} \\
\multicolumn{1}{c|}{}                              & Inference & \textbf{15.19} & \textbf{23.57} & \textbf{35.16} & \textbf{65.35} \\
\multicolumn{1}{c|}{}                              & Parameter & 6411K          & 6461K          & {\ul 6534K}    & {\ul 6731K}    \\ \hline
\end{tabular}}
\caption{Experimental time consumption (seconds per epoch) and parameter analysis on PEMS03 dataset. The best results are in bold and the second best are underlined.}
\label{tab8}
\end{table}

\begin{table}[ht]
\centering
\footnotesize 
\setlength{\tabcolsep}{0.7mm}{
\begin{tabular}{cc|cccc}
\hline
\multicolumn{2}{c|}{\multirow{2}{*}{Methods}}                  & \multicolumn{4}{c}{exchange\_rate}                                \\ \cline{3-6} 
\multicolumn{2}{c|}{}                                          & 48             & 96             & 192            & 336            \\ \hline
\multicolumn{1}{c|}{\multirow{3}{*}{DST-SGNN}}     & Train     & 37.75          & 37.08          & 36.36          & 34.23          \\
\multicolumn{1}{c|}{}                              & Inference & 8.65           & 8.37           & 7.78           & 6.93           \\
\multicolumn{1}{c|}{}                              & Parameter & {\ul 1465K}    & {\ul 1514K}    & \textbf{1613K} & \textbf{1760K} \\ \hline
\multicolumn{1}{c|}{\multirow{3}{*}{MiTSformer}}   & Train     & 23.28          & 23.13          & 22.91          & 22.16          \\
\multicolumn{1}{c|}{}                              & Inference & 10.5           & 10.56          & 9.64           & 8.8            \\
\multicolumn{1}{c|}{}                              & Parameter & 6811K          & 6848K          & 6922K          & 7033K          \\ \hline
\multicolumn{1}{c|}{\multirow{3}{*}{Fredformer}}   & Train     & {\ul 11.79}    & {\ul 11.77}    & {\ul 11.71}    & {\ul 10.43}    \\
\multicolumn{1}{c|}{}                              & Inference & \textbf{0.87}  & \textbf{0.73}  & \textbf{0.72}  & \textbf{0.8}   \\
\multicolumn{1}{c|}{}                              & Parameter & \textbf{680K}  & \textbf{1278K} & {\ul 2572K}    & {\ul 4754K}    \\ \hline
\multicolumn{1}{c|}{\multirow{3}{*}{iTransformer}} & Train     & \textbf{10.81} & \textbf{10.91} & \textbf{10.71} & \textbf{10.41} \\
\multicolumn{1}{c|}{}                              & Inference & {\ul 1.59}     & {\ul 1.39}     & {\ul 1.68}     & {\ul 1.23}     \\
\multicolumn{1}{c|}{}                              & Parameter & 6387K          & 6411K          & 6461K          & 6534K          \\ \hline
\end{tabular}}
\caption{Experimental time consumption (seconds per epoch) and parameter analysis on exchange\_rate dataset. The best results are in bold and the second best are underlined.}
\label{tab9}
\end{table}

As Table \ref{tab7}, Table \ref{tab8}, and Table \ref{tab9} show, our method demonstrates unique advantages in terms of the number of parameters and time efficiency. In terms of the number of parameters, our method shows an advantage across all the tested datasets. Regarding time efficiency, its performance is closely related to the variable scale of the dataset. Especially on datasets with a rich number of variables, the advantage in operational efficiency is even more prominent.

Taking the Solar dataset as an example, our method is relatively leading in terms of time consumption and has the fastest running speed. During the training process on the PEMS03 dataset, compared with the two spatio-temporal models, Fredformer and MiTSformer, the training time of our method is significantly shortened, which also proves its high efficiency. However, on the exchange\_rate dataset, our method performs relatively average. This is mainly because this dataset has only 8 variables and is small in scale, so it is unable to fully utilize the advantages of our method in handling large-scale variables and complex graph structures.

It can be seen that when facing datasets with a large number of variables, our method can fully unleash its performance potential, achieve more efficient operation and processing, and has significant value in practical applications.

\end{document}